\documentclass[11pt]{article}
\usepackage[utf8]{inputenc}
\usepackage[margin=1in]{geometry}
\title{Near-Optimal Last-Iterate Convergence of SGD
in the Smooth Interpolation Regime}

\usepackage{amsmath,amsfonts,amsthm,amssymb}
\usepackage{mathtools}
\usepackage{algorithm,algorithmic}
\usepackage{thmtools}
\usepackage{makecell}
\usepackage{multirow}        %
\usepackage{booktabs}        %
\usepackage{ifthen}

\newcommand{\Otil}{\widetilde{O}}

\usepackage{markdown}
\usepackage{xcolor}
\usepackage{enumitem}

\usepackage{makecell}       %
\usepackage{multirow}       %
\usepackage{caption}        %
\usepackage{float} 
\usepackage{booktabs, multirow, makecell, array, xspace}
\usepackage{natbib}
\newcolumntype{C}[1]{>{\centering\arraybackslash}m{#1}}

\renewcommand{\arraystretch}{1.2}

\newcommand{\R}{\mathbb{R}}

\newcommand{\E}{\boldsymbol{\mathbb{E}}}
\newcommand{\norm}[2][a]{\brackets[#1]{\|}{#2}{\|}}

\newcommand{\brackets}[4][a]{%
  \ifx a#1\left#2 #3 \right#4\else
  \ifx s#1#2 #3 #4\else
  \ifx b#1\big#2 #3 \big#4\else
  \ifthenelse{\equal{#1}{B}} {\Big#2 #3 \Big#4} {}
  \ifthenelse{\equal{#1}{Bg}} {\Bigg#2 #3 \Bigg#4} {}
  \fi\fi\fi
}

\newcommand{\eqq}{\coloneqq}

\DeclareMathOperator*{\argmin}{arg\,min}

\newcommand{\cZ}{\mathcal Z}

\newenvironment{aligni*}{\begin{math}}{\end{math}}
\newenvironment{equationi*}{\begin{math}}{\end{math}}

\usepackage{pifont}

\definecolor{plum}{rgb}{.6,0,.6}

\theoremstyle{plain}
\newtheorem{theorem}{Theorem}

\newtheorem{lemma}{Lemma}
\newtheorem{corollary}{Corollary}

\theoremstyle{definition}

\theoremstyle{remark}

\newtheorem*{theorem*}{\textbf{Theorem}}
\newtheorem*{lemma*}{\textbf{Lemma}}

\usepackage[extdef=true]{delimset}
\usepackage{adjustbox}
\usepackage{threeparttable}

\newcommand{\halfcmark}{\checkmark\kern-5pt\raisebox{2pt}{\scriptsize\textbackslash}\kern1pt}

\newcommand{\ip}[2]{\langle #1,#2\rangle}

\usepackage{hyperref}
\usepackage[capitalize]{cleveref}
\author{%
    Matan Schliserman%
    \thanks{\scriptsize Blavatnik School of Computer Science and AI, Tel Aviv University; \texttt{schliserman@mail.tau.ac.il}.}%
    \and
    Tomer Koren%
    \thanks{\scriptsize Blavatnik School of Computer Science and AI, Tel Aviv University, and Google Research Tel Aviv; \texttt{tkoren@tauex.tau.ac.il}.}
}

\begin{document}

\maketitle

\begin{abstract}
We study the last iterate of stochastic gradient descent (SGD) with a constant stepsize on smooth convex objectives in the low-noise and interpolation regimes, where the variance $\sigma_\star^2$ of the stochastic gradients at the optimum is small or zero.
We establish that after $T$ steps of SGD on $\beta$-smooth convex losses with any constant stepsize $0<\eta<2/\beta$, the last iterate attains expected excess risk $\smash{\widetilde O\bigl(\tfrac{1}{\eta(2-\beta\eta)^3T}+\tfrac{\eta}{(2-\beta\eta)^4}\sigma_\star^2\bigr)}$, which gives the rate $\smash{\widetilde O(1/T)}$ for every constant stepsize in the interpolation regime ($\sigma_\star=0$), matching the $\Omega(1/T)$ lower bound up to logarithmic factors and improving upon the $O(1/\sqrt T)$ rate of \citet{attia2025fast} for $\eta=1/\beta$.
Our analysis is based on a new technique, which compares the second moments of the stochastic gradients along the SGD trajectory across only logarithmically many time scales.
For quadratic losses in the interpolation regime, an adaptation of this technique gives an $O(1/T)$ rate for every constant stepsize, extending the recent optimal rate for the randomized Kaczmarz method of \citeauthor{derezinski2026} (2026) to stepsizes above the greedy stepsize with an arguably simpler proof.
\end{abstract}

\section{Introduction}
\label{sec:intro}

We study the convergence of stochastic gradient descent (SGD) for smooth convex objectives in the low-noise and interpolation regimes. Concretely, we consider optimization problems of the form
$$
    \min_{x\in\R^d}F(x)
    \eqq
    \E_{z\sim\cZ}[f(x;z)],
$$
where $\cZ$ is a distribution over a sample space $Z$, and $f(\cdot;z)\colon\R^d\to\R$ is convex and $\beta$-smooth for all $z\in Z$.
In the \emph{low-noise regime}, the gradient noise at the optimum, defined as $\sigma_\star^2\eqq\E_z\norm{\nabla f(x^\star;z)}^2$ for $x^\star\in\argmin_{x\in\R^d}F(x)$, is assumed to be small or even zero. The special case where $\sigma_\star^2=0$ is known as the \emph{interpolation regime}, in which $x^\star$ minimizes $f(\cdot;z)$ for almost all $z\in Z$.
We analyze the expected excess risk of SGD, which, given a stepsize $\eta>0$ and an i.i.d.\ sample $z_1,\ldots,z_T\sim\cZ$, performs the following updates, starting from an initialization $x_1\in\R^d$:
\begin{align*}
    x_{t+1}=x_t-\eta\nabla f(x_t;z_t).
    \qquad
    (t=1,\ldots,T)
\end{align*}
Standard convergence bounds of SGD apply to an average (possibly weighted) of the iterates. It is by now a classical result that the average iterate of SGD with a constant stepsize converges at the \emph{fast rate} $O(1/T)$ in the interpolation regime with $\eta=\Theta(1/\beta)$, and at the rate $O(1/T+\sigma_\star/\sqrt T)$ in the low-noise regime with a suitably tuned stepsize~\citep{srebro2010smoothness,ma2018power}, as compared to the slower $O(1/\sqrt T)$ rate that holds more generally.

In practice, however, SGD is commonly run with a constant stepsize, independent of $T$ and of order $\Theta(1)$, and returns its last iterate. This has motivated a long line of work on \emph{last-iterate bounds}, namely, bounds that apply to the expected excess risk $\E[F(x_T)-F(x^\star)]$ of the last SGD iterate rather than to an average of the iterates~\citep{ICML2012Rakhlin_261,shamir13sgd,jain2019making,harvey2019tight,orabona2020last,zamani2023exact,liu2024revisiting}. For convex Lipschitz losses, the last iterate can be worse than the average by a logarithmic factor~\citep{shamir13sgd,harvey2019tight}, and stepsize schedules that depend on the horizon can close this gap~\citep{jain2019making}. These results, however, require stepsizes of $O(1/\sqrt T)$, and thus do not capture the fast rates attained by the average iterate in the smooth low-noise regime.

A recent line of work has focused on the last iterate of SGD with a constant stepsize in the smooth low-noise regime. This regime is natural for modern overparameterized models, which often interpolate their training data~\citep{ma2018power}. It also captures classical procedures that output their last iterate, such as the randomized Kaczmarz method~\citep{needell2014stochastic} and continual learning of realizable tasks~\citep{evron2022catastrophic,evron2025better,derezinski2026colt,derezinski2026}, which can be presented as instances of SGD in the interpolation regime with the ``greedy'' stepsize $\eta=1/\beta$ (e.g. \cite{evron2025better}). For quadratic losses, \citet{varre2021last} proved an $O(\log T/T)$ rate with a stepsize of order $1/(\beta\log T)$, and at the greedy stepsize, the known rates improved from $O(T^{-1/4})$~\citep{evron2025better} to $O(T^{-3/4})$~\citep{derezinski2026colt} and, very recently, to the optimal $O(1/T)$~\citep{derezinski2026}; the latter two results rely on the linear structure of the updates. For general smooth convex losses, \citet{attia2025fast} and \citet{garrigos2025last} obtained fast last-iterate rates with stepsizes of order $1/(\beta\log T)$, which depend on the horizon. For larger constant stepsizes, these guarantees deteriorate polynomially in $T$, and at the greedy stepsize, the best known rate is $O(1/\sqrt T)$~\citep{attia2025fast} (see \cref{tab:rates}). Thus, for general smooth convex losses with a constant stepsize, there is a gap between the best known last-iterate bounds and the fast $O(1/T)$ rate of the average iterate.

In this work, we establish that the last iterate of SGD on convex and $\beta$-smooth losses with any constant stepsize $0<\eta<2/\beta$ attains an expected excess risk of $\smash{\Otil\bigl(\tfrac{1}{\eta T}+\eta\sigma_\star^2\bigr)}$. To achieve this bound, we introduce a new multiscale technique for last-iterate analysis, which tracks the sequence of second moments of the stochastic gradients along the SGD trajectory. Then, for quadratic losses in the interpolation regime, we show that an adaptation of our technique removes the logarithmic factors and achieves an $O(1/T)$ rate for every constant stepsize $\eta< 2/\beta$, extending the recent optimal rate of \citet{derezinski2026}, proved for $\eta\leq 1/\beta$, with an arguably simpler proof.  Finally, for completeness, we apply our bounds to the randomized Kaczmarz method and related applications.

\begin{table}[t]
\centering
\small
\renewcommand{\arraystretch}{1.6}
\setlength{\tabcolsep}{3pt}
\begin{tabular}{@{}llcc@{}}
\toprule
\textbf{Reference} & \textbf{Losses} & \textbf{\begin{tabular}[c]{@{}c@{}}Bound for\\ $0<\eta<2/\beta$\end{tabular}} & \textbf{\begin{tabular}[c]{@{}c@{}}Bound for\\ $\eta=1/\beta$, $\sigma_\star=0$\end{tabular}} \\
\midrule
\citet{evron2025better} & quadratic & $O\Bigl(\dfrac{1}{\eta \delta T^{\delta^2/4}}\Bigr)$ & $O\Bigl(\dfrac{1}{T^{1/4}}\Bigr)$ \\
\citet{attia2025fast} & convex and smooth & $\Otil\Bigl(\dfrac{1}{\eta \delta T^{\delta/2}}+\dfrac{\eta T^{1-\delta/2}\sigma_\star^2}{\delta^2}\Bigr)$ & $O\Bigl(\dfrac{1}{\sqrt T}\Bigr)$ \\
\citet{garrigos2025last} & convex and smooth & $\Otil\Bigl(\dfrac{1}{\eta(1-\beta\eta)T^{1-\phi}}+\dfrac{\eta T^{\phi}\sigma_\star^2}{(1-\beta\eta)^2}\Bigr)^{\dagger}$ & --- \\
\citet{derezinski2026} & quadratic & $O\Bigl(\dfrac1{\eta T}\Bigr)^{\ddagger}$ & $O\Bigl(\dfrac1T\Bigr)$ \\
\textbf{This work} (\cref{thm:main}) & convex and smooth & $\Otil\Bigl(\dfrac{1}{\eta \delta^3T}+\dfrac{\eta\sigma_\star^2}{\delta^4}\Bigr)$ & $\Otil\Bigl(\dfrac{1}{T}\Bigr)$ \\
\textbf{This work} (\cref{thm:quadratic}) & quadratic & $O\Bigl(\dfrac{1}{\eta \delta^3T}\Bigr)$ & $O\Bigl(\dfrac{1}{T}\Bigr)$ \\
\midrule
\citet{zhang2023lower} & convex and smooth & $\Omega\Bigl(\dfrac{1}{\eta T}\Bigr)$ & $\Omega\Bigl(\dfrac{1}{T}\Bigr)$ \\
\bottomrule
\end{tabular}
\caption{Last-iterate bounds on the expected excess risk of SGD after $T$ steps with a constant stepsize $\eta$ independent of $T$, for $\norm{x_1-x^\star}=1$, $\delta\eqq2-\beta\eta$, and $\phi\eqq2\beta\eta/(1+\beta\eta)$. The rows labeled ``quadratic'' assume interpolation ($\sigma_\star=0$). ``---'' indicates that no bound is stated. ${}^\dagger$Only for $\eta<1/\beta$. ${}^\ddagger$Only for $\eta\leq1/\beta$. The last row is a lower bound in the interpolation regime.}
\label{tab:rates}
\vspace{-10pt}
\end{table}

\subsection{Summary of contributions}

In more detail, our main contributions are as follows (see a comparison to previous work in \cref{tab:rates}):
\begin{enumerate}[label=(\roman*),leftmargin=*]
    \item In \cref{thm:main} (\cref{sec:main-result}), we establish a convergence rate of
    $
        \Otil\biggl(\frac{\norm{x_1-x^\star}^2}{\eta(2-\beta\eta)^3T}+\frac{\eta}{(2-\beta\eta)^4}\sigma_\star^2\biggr)
    $
    for the last iterate of SGD on convex, $\beta$-smooth losses, with any constant stepsize $0<\eta<2/\beta$.
    In the interpolation regime, the rate is $\Otil(1/T)$ for every such stepsize, matching the lower bound up to logarithmic factors. Compared with \citet{attia2025fast}, this replaces the decay $T^{-(1-\beta\eta/2)}$ by $\Otil(1/T)$ and replaces polynomial growth of the noise term by logarithmic growth.
    In the low noise regime, tuning the stepsize gives the near-optimal rate of  $\Otil(1/T+\sigma_\star/\sqrt T)$.

    \item On the technical side, we bound the last element of the sequence of second moments of the stochastic gradients by a multiscale comparison of the sequence with its shifted copies at only logarithmically many time scales, in contrast to the polynomially many past iterates used in previous analyses.  This is achieved using virtual iterates, namely, SGD trajectories with a block of consecutive updates deleted, which enable us to relate the risk of the last iterate to the average risk of the iterates with only logarithmic factors in $T$, instead of the polynomial factors of previous bounds. This proof technique is outlined in \cref{sec:outline}.

    \item In \cref{thm:quadratic} in \cref{sec:quadratic}, we show that our technique gives the rate $O\bigl(\frac{\|x_1-x^\star\|^2}{\eta(2-\beta\eta)^3T}\bigr)$ for quadratic losses in the interpolation regime, for every constant stepsize $0<\eta<2/\beta$. This extends the optimal $O(1/T)$ rate of \citet{derezinski2026} to stepsizes above the greedy stepsize, with an arguably simpler proof. 

 \item For completeness, in \cref{app:applications} we apply our results to random projections onto convex sets, the randomized Kaczmarz method, and forgetting in continual learning, obtaining, for instance, $O(1/T)$ rates for the randomized Kaczmarz method and for forgetting in continual linear regression.

\end{enumerate}

\paragraph{Open problems.}
Our upper bound exceeds the $\Omega(1/T)$ lower bound by a factor of $\log^3T$. The main question left open is whether these logarithmic factors are necessary for general smooth convex losses. This question is of particular interest since, for non-smooth convex losses, logarithmic factors are necessary when the stepsizes do not depend on the horizon~\citep{harvey2019tight,kornowski2026}. However, the known constructions rely heavily on nonsmoothness, and it remains unclear whether logarithmic factors are also necessary in the smooth case. In addition, we leave as an open problem the exact dependence of the bounds on $2-\beta\eta$.

\subsection{Additional related work}
\label{sec:related}

\paragraph{Last-iterate analysis of SGD.}
Last-iterate convergence of SGD has been studied extensively for convex Lipschitz losses in the gradient oracle model, where decaying or horizon-dependent stepsizes yield near-optimal guarantees~\citep{ICML2012Rakhlin_261,shamir13sgd,harvey2019tight,jain2019making,orabona2020last,zamani2023exact,liu2024revisiting}.
\citet{cortild2025new} used a computer-aided Lyapunov analysis to prove average-iterate guarantees.
For least squares under interpolation, \citet{varre2021last} obtained constant-stepsize guarantees under additional assumptions on the features and solution.
Our analysis gives a $\Otil(1/T)$ last-iterate rate in the interpolation regime for every convex and smooth loss and every constant stepsize below $2/\beta$.

\paragraph{SGD in the interpolation regime.}
Interpolation is a natural regime for overparameterized deep networks, which are commonly trained with SGD until they fit their training data. A growing literature studies SGD for smooth objectives~\citep[e.g.,][]{lan2012optimal,bach2013non,ge2019stepDecay}, and in particular in overparameterized and interpolation settings~\citep[e.g.,][]{vaswani2019fast,berthier2020tight,varre2021last,wu2022last,liu2023aiming}, mostly for underdetermined linear regression or for smooth convex component losses with a strongly convex population objective. In contrast, our results impose neither a parametric form on the losses nor a strongly convex objective.

\paragraph{Stability and coupling arguments.}
Our last-iterate analysis involves virtual iterates, namely, coupled copies of the SGD trajectory, a technique that has appeared in other contexts. Algorithmic stability analyses compare SGD trajectories that differ in a few samples and exploit the nonexpansiveness of smooth convex gradient updates~\citep{hardt2016train,lei20stab,schliserman2022stability}. Our comparison deletes blocks of updates. Under i.i.d.\ sampling, the resulting virtual trajectories have the distributions of earlier SGD iterates, turning deletion stability into control of variation across time. \citet{sherman2021optimal} and \citet{liu2026random} also used coupled virtual sequences, constructed by swapping two entries of a random permutation, to study stability under sampling without replacement.

\paragraph{The Kaczmarz method.}
The Kaczmarz method solves consistent linear systems by successive projections onto the solution sets of single equations or blocks of equations~\citep{karczmarz1937angenaherte,elfving1980block,needell2014paved}, and more general projection methods find a common point of convex sets~\citep{gubin1967methodProjections,boyd2003alternating}. In \cref{app:projections}, we apply our bounds to the randomized Kaczmarz method and obtain guarantees that are independent of the condition number. Classical analyses establish linear convergence at rates that depend on the condition number~\citep[e.g.,][]{strohmer2009randomized,gower2015randomized,needell2014paved,oswald2015convergence}, which can be arbitrarily large, while recent work obtained tight bounds for the cyclic method~\citep{yu2026tight} and condition-number-free rates~\citep{derezinski2026colt,derezinski2026}, including the optimal $O(1/T)$ rate, which our bounds recover through a more general and arguably simpler analysis.

\paragraph{Continual learning.}
Continual learning trains a model on a sequence of tasks, during which its performance on earlier tasks may degrade, a phenomenon known as catastrophic forgetting. In \cref{app:continual}, we show that our bounds apply to forgetting in continual linear regression and classification, settings whose theory has been studied extensively~\citep[e.g.,][]{doan2021NTKoverlap,lin2023theory,peng2023ideal,goldfarb2023analysis,li2023fixed,hiratani2024disentangling,evron23continualClassification,levinstein2025optimal}. The extent of forgetting depends on task similarity, overparameterization, and the trade-off between stability and plasticity~\citep{mermillod2013stability,goldfarb2024theJointEffect}, and it can diminish under cyclic or random task orderings~\citep{lesort2022scaling,hemati2024continual,swartworth2023nearly,jung2025convergence,cai2025lastIterate}. Beyond linear models, the theory has recently been extended to deep networks~\citep{schliserman2026convergence}.

\section{Setup and main result}
\label{sec:main}

\paragraph{Problem setup.}
Let $Z$ be an index set for convex loss functions $f(\cdot;z)\colon\R^d\to\R$, and $\cZ$ be an arbitrary distribution over $Z$.
We consider the unconstrained stochastic optimization problem
\begin{align*}
    \min_{x\in\R^d}\Bigl\{F(x)\eqq\E_{z\sim\cZ}f(x;z)\Bigr\},
\end{align*}
under the assumption that the loss functions are individually smooth.
Specifically, we assume that for all $z\in Z$, the function $f(\cdot;z)$ is $\beta$-smooth.\footnote{That is, $\norm{\nabla f(y;z)-\nabla f(x;z)}\leq\beta\norm{y-x}$ for all $x,y\in\R^d$.} (Throughout, $\norm{\cdot}$ refers to the $\ell_2$ norm, and $\log$ to the natural logarithm.)
We further assume that the objective $F$ admits a minimizer, denoted by $x^\star\in\argmin_{x\in\R^d}F(x)$, and that the variance of the stochastic gradients at $x^\star$ is bounded, $\sigma_\star^2\eqq\E_{z\sim\cZ}\norm{\nabla f(x^\star;z)}^2<\infty$.
The special case where $\sigma_\star^2=0$ is referred to as the \emph{interpolation regime}, in which $x^\star$ minimizes $f(\cdot;z)$ for almost all $z\in Z$.

Throughout the paper, we consider the fixed stepsize SGD algorithm, which given an initialization $x_1\in\R^d$, a stepsize $\eta>0$ and an i.i.d.\ sample $z_1,z_2,\ldots\sim\cZ$, performs at each step $t=1,2,\ldots$ the update
\begin{align}\label{eq:sgd}
    x_{t+1}=x_t-\eta\nabla f_t(x_t),
    \qquad
    \text{where } f_t(x)\eqq f(x;z_t),
\end{align}
and returns $x_T$. We bound the expected excess risk $\E[F(x_T)-F(x^\star)]$ of the last iterate.

\subsection{Main result}
\label{sec:main-result}

Our main result shows that the expected excess risk of the last iterate decays at the rate $1/T$, up to logarithmic factors and a noise term, for every constant stepsize $0<\eta<2/\beta$.

\begin{theorem}
\label{thm:main}
Let $f\colon\R^d\times Z\to\R$ be such that $f(\cdot;z)$ is $\beta$-smooth and convex for every $z\in Z$. Assume there exists a minimizer $x^\star\in\argmin_{x\in\R^d}F(x)$, and that $\sigma_\star^2<\infty$.
Then, for SGD initialized at $x_1\in\R^d$ with stepsize $0<\eta<2/\beta$, we have the following last-iterate guarantee for every $T\geq1$:
\begin{align*}
    \E[F(x_T)-F(x^\star)]
    \leq
    1056\biggl(
    \frac{\norm{x_1-x^\star}^2}{\eta(2-\beta\eta)^3T}
    +\frac{\eta}{(2-\beta\eta)^4}\sigma_\star^2
    \biggr)\log^3(eT).
\end{align*}
In particular:
\begin{enumerate}[nosep,label=(\roman*),leftmargin=*]
    \item in the interpolation regime ($\sigma_\star=0$), when $\eta=\frac1\beta$, it holds that
    \begin{align*}
        \E[F(x_T)-F(x^\star)]
        \leq
        \frac{1056\beta\norm{x_1-x^\star}^2\log^3(eT)}{T};
    \end{align*}
    \item when $\eta=\min\bigl\{\frac1\beta,\frac{\norm{x_1-x^\star}}{\sigma_\star\sqrt T}\bigr\}$, it holds that
    \begin{align*}
        \E[F(x_T)-F(x^\star)]
        \leq
        1056\biggl(\frac{\beta\norm{x_1-x^\star}^2}{T}+\frac{2\sigma_\star\norm{x_1-x^\star}}{\sqrt T}\biggr)\log^3(eT).
    \end{align*}
\end{enumerate}
\end{theorem}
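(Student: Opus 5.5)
Let $G_t\eqq\E\norm{\nabla f_t(x_t)}^2$ denote the second moment of the stochastic gradient at step $t$, and $\delta\eqq2-\beta\eta$. The plan has three ingredients: an averaged bound on $G_t$, a relation between $F(x_T)-F(x^\star)$ and $G_T$, and a multiscale comparison showing that $G_T$ is essentially no larger than an average of the $G_s$.

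\emph{Step 1: averaged bound.} Expand $\norm{x_{t+1}-x^\star}^2$. Co-coercivity of $\nabla f_t$ gives $\ip{\nabla f_t(x)-\nabla f_t(x^\star)}{x-x^\star}\ge\frac1\beta\norm{\nabla f_t(x)-\nabla f_t(x^\star)}^2$, and a Young split separates off the $\nabla f_t(x^\star)$ terms. Together these should yield
\[
\sum_{t\le T}\E\bigl[\eta\,\delta\,(F(x_t)-F(x^\star))+c\,\eta^2\delta\,G_t\bigr]\lesssim\norm{x_1-x^\star}^2+\frac{\eta^2 T\sigma_\star^2}{\delta}.
\]
The same argument run from any starting index also bounds sums over any window $[s,T]$.

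\emph{Step 2: from risk to gradient norm.} Use the standard last-iterate device of comparing $x_T$ with earlier iterates. Since $\E[F(x_T)-F(x^\star)]\le\E\ip{\nabla F(x_T)}{x_T-x^\star}$ is awkward to use directly, I would instead show $F(x_T)-F(x^\star)\lesssim\frac1{\eta\delta}\cdot(\text{average risk over a window})+\eta\sum(\text{gradient second moments})$. The key point is that whenever $G_T$ is comparable to the window averages of $G_s$, only logarithmic losses arise.

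\emph{Step 3: multiscale comparison, the core and main obstacle.} I would show that $G_{T}$ is bounded by $G_{T-k}$ plus a small error for blocks of length $k=2^j$. Build a virtual iterate $\tilde x_T$ by running SGD on $z_1,\dots,z_{T-k-1},z_{T},\dots$ with the block of length $k$ deleted, so that $\tilde x_T$ is distributed like $x_{T-k}$. For $\eta<2/\beta$ each gradient map $x\mapsto x-\eta\nabla f(x)$ is nonexpansive, so $\norm{x_T-\tilde x_T}$ is at most the displacement produced by the deleted block. That displacement is bounded in second moment by $\eta^2 k\sum_{\text{block}}G_s$; a sharper bound using co-coercivity brings in a factor $1/\delta$. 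Smoothness then gives
\[
\sqrt{G_T}\le\sqrt{G_{T-k}}+\beta\sqrt{\E\norm{x_T-\tilde x_T}^2}.
\]
A dyadic telescoping over $j=0,\dots,\log_2T$ lets one compare $G_T$ with averages of $G$ over windows $[T/2^{j+1},T/2^j]$; summed against Step 1, this gives $G_T\lesssim\log^2 T\cdot(\text{bound of Step 1})/(\eta^2\delta^2 T)$. The difficulty is keeping the errors additive with only constant blowup per scale rather than multiplicative. I would first try to control the drift in squared form, using $\norm{a+b}^2\le(1+1/L)\norm a^2+(1+L)\norm b^2$ with $L=\log T$, so that the product over $\log T$ scales stays $O(1)$. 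This is what produces the $\log^3$ factor and the extra powers of $\delta$.

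\emph{Step 4: combine.} Plug the resulting bound on $G_T$, and on the nearby $G_s$, into Step 2 and collect the constants and $\delta$-powers to obtain $\frac{\norm{x_1-x^\star}^2}{\eta\delta^3T}+\frac{\eta\sigma_\star^2}{\delta^4}$ times $\log^3(eT)$. The corollaries are then direct substitutions. For (i), set $\delta=1$, $\eta=1/\beta$. For (ii), with $\eta\le1/\beta$ we have $\delta\ge1$, and $1/(\eta T)\le\beta/T+\sigma_\star/(\norm{x_1-x^\star}\sqrt T)$.
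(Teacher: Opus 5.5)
Steps~1 and~4 are fine, including the corollary computations. The gap is in Step~3. Your inequality $\sqrt{G_T}\le\sqrt{G_{T-k}}+\beta\sqrt{\E\|x_T-\tilde x_T\|^2}$ is valid, but it is too weak to give any decay in $T$.

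Step~1 only controls averages of $G$ over windows of length $\Theta(T)$, so your telescoping must use lags $k=\Theta(T)$. Indeed, comparing with the windows $[T/2^{j+1},T/2^j]$ forces $k\ge T/2$. For such lags, use the sharpest available displacement bound, $\E\|x_{r+k}-x_r\|^2\le\frac{4\eta k}{\delta}\bigl(\E[F(x_r)-F(x^\star)]+\eta\sigma_\star^2\bigr)$ with $\delta=2-\beta\eta$, at a start $r$ whose risk is $O(\|x_1-x^\star\|^2/(\eta\delta T))$. Even then, with $\sigma_\star=0$, your squared error term is of order $\beta^2\|x_1-x^\star\|^2/\delta^2$. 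That is a constant, whereas the target is $G_T=\widetilde{O}(1/(\eta T))$.

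This is not slack in the displacement estimate. For a quadratic with an eigenvalue of order $1/(\eta T)$, the iterates really move by $\Theta(\|x_1-x^\star\|)$ between times $T/4$ and $3T/4$ while the gradients hardly change. The Lipschitz bound prices that motion at curvature $\beta$. Your $(1+1/L)$ splitting only controls how errors compound across scales, not the size of each scale's error, so it cannot repair this.

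The missing idea is to use co-coercivity along the \emph{entire} coupled trajectory rather than Lipschitzness at the single time $T$. Restart SGD from $x_r$ at time $r+h$ and feed it the same samples $z_{r+h},z_{r+h+1},\ldots$. Then $\|x_{s+1}-\tilde x_{s+1}\|^2\le\|x_s-\tilde x_s\|^2-\frac{\eta\delta}{\beta}\|\nabla f_s(x_s)-\nabla f_s(\tilde x_s)\|^2$ telescopes. Combined with $\E\|\nabla f_s(\tilde x_s)\|^2=G_{s-h}$ and Cauchy--Schwarz, it yields, writing $\sigma_s=\sqrt{G_s}$,
\[
\sum_{s=r}^{t-h}(\sigma_{s+h}-\sigma_s)^2\le\frac{\beta}{\eta\delta}\,\E\|x_{r+h}-x_r\|^2 .
\]
This bounds the \emph{sum} of the squared lag-$h$ differences over $\Theta(t)$ times, and the bound does not grow with the number of terms.

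To use such sums, write $\sigma_t=M_J+\sum_{j<J}(M_j-M_{j+1})$, where $M_j$ is the average of the last $2^j$ values of $\sigma$. Jensen gives $(M_j-M_{j+1})^2\le\frac1{4h}\sum_{s=r}^{t-h}(\sigma_{s+h}-\sigma_s)^2$ for $h=2^j$. The $1/h$ cancels the linear-in-$h$ displacement bound. So every scale contributes $O\bigl((\beta\,\E[F(x_r)-F(x^\star)]+\sigma_\star^2)/\delta^2\bigr)$, for a single start $r\le\lceil t/4\rceil$ chosen with risk at most four times the average. Cauchy--Schwarz over the at most $\log_2 t$ scales then gives the $\log^2$ factor.

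Your Step~2 also has to be made precise with harmonic weights via suffix averaging: $\E[F(x_T)-F(x^\star)]\le\frac1T\sum_{t\le T}\E[F(x_t)-F(x^\star)]+\frac\eta2\bigl(G_T+\sum_{t<T}G_t/(T-t)\bigr)$. This requires the bound on $G_t$ for every $t\le T$, and it is where the third logarithm comes from.
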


In the interpolation regime ($\sigma_\star=0$), the bound gives the rate $\Otil(1/T)$ for every constant stepsize $0<\eta<2/\beta$, whereas the bound of \citet{attia2025fast} decays only as $T^{-(1-\beta\eta/2)}$. In the low-noise regime, the stepsize in (ii) is tuned to the horizon and gives the near-optimal rate $\Otil(1/T+\sigma_\star/\sqrt T)$. 

\section{Proof of main result}
\label{sec:outline}

In this section we prove \cref{thm:main}. For this, our main techincal tool is to analyze the sequence of the second moments of the stochastic gradients along the SGD dynamics, defined as
\begin{align}\label{eq:sigma}
    \sigma_s
    \eqq
    \bigl(\E\norm{\nabla f_s(x_s)}^2\bigr)^{1/2}
    \qquad
    (s\geq1).
\end{align}
Below we elaborate on the main steps of the proof.

\subsection{A multiscale comparison for the last element of a sequence}
\label{sec:seq-last}

The first step is to bound the last element $\sigma_t$ of the sequence by the average of the sequence and by a multiscale comparison of the sequence with itself at the lags $h=1,2,4,\ldots$, all with the same starting point $r$, which will be determined later (in \cref{sec:seq-gradient}). In fact, we show the following lemma, which holds for any real sequence.
\begin{lemma}
\label{lem:endpoint}
Let $t\geq1$, let $J\geq0$ be the integer for which $2^J\leq t+1-\lceil t/4\rceil<2^{J+1}$, and let $1\leq r\leq\lceil t/4\rceil$. Then, for all real numbers $\sigma_1,\ldots,\sigma_t$,
\begin{align*}
    \sigma_t^2
    \leq
    \frac4t\sum_{s=1}^t\sigma_s^2+2\log^2t\max_{0\leq j<J}\frac1{2^j}\sum_{s=r}^{t-2^j}(\sigma_{s+2^j}-\sigma_s)^2,
\end{align*}
where the maximum is $0$ if $J=0$.
\end{lemma}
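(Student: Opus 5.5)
The plan is a dyadic telescoping argument. We compare $\sigma_t$ with $\sigma_s$ for every $s$ in the window $W\eqq\{t-N+1,\ldots,t\}$, where $N\eqq2^J$, and then average over $s\in W$. The first term of the bound will come from $\frac1N\sum_{s\in W}\sigma_s^2\le\frac1N\sum_{s=1}^t\sigma_s^2$. Since $2^{J+1}>t+1-\lceil t/4\rceil\ge\frac{3t}{4}$, we have $N\gtrsim 3t/8$, so this term is of order $1/t$. The second term will come from the increments $\sigma_t-\sigma_s$. The case $J=0$ (so $t\le2$, or more precisely $t+1-\lceil t/4\rceil<2$) is trivial: the maximum is $0$, and $\sigma_t^2\le\frac4t\sum_s\sigma_s^2$ holds directly since $t\le 2$ gives $4/t\ge2\ge1$.

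For the increments, write $d\eqq t-s\in\{0,\ldots,N-1\}$ and define the chain of points $u_j(d)\eqq t-(d\bmod 2^{j})$ for $j=0,\ldots,J$. Then $u_0=t$, $u_J=s$, and consecutive points satisfy either $u_{j}=u_{j+1}$ (bit $j$ of $d$ is zero) or $u_j=u_{j+1}+2^j$. Hence
\begin{align*}
\sigma_t-\sigma_s=\sum_{j=0}^{J-1}\bigl(\sigma_{u_{j+1}+2^j}-\sigma_{u_{j+1}}\bigr)\mathbb 1[\text{bit } j \text{ of } d \text{ is } 1],
\end{align*}
and by Cauchy--Schwarz $(\sigma_t-\sigma_s)^2\le J\sum_{j<J}\Delta_j(d)^2$, where $\Delta_j(d)$ denotes the $j$-th summand.

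The key counting step is the following. For fixed $j$, the starting point $u=u_{j+1}(d)=t-(d\bmod 2^{j+1})$ takes each of at most $2^{j+1}$ values exactly $N/2^{j+1}$ times as $d$ ranges over $\{0,\ldots,N-1\}$. Therefore
\begin{align*}
\frac1N\sum_d\Delta_j(d)^2\le\frac1{2^{j+1}}\sum_{u}(\sigma_{u+2^j}-\sigma_u)^2 .
\end{align*}
Each such $u$ satisfies $u+2^j\le t$ and $u\ge t-N+1\ge\lceil t/4\rceil\ge r$, using $N\le t+1-\lceil t/4\rceil$. So this sum is at most $\frac12$ times the $j$-th term in the maximum. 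Summing over $j<J$ gives $\frac1N\sum_d(\sigma_t-\sigma_s)^2\le\frac{J^2}{2}\max_j(\cdot)$.

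Finally, combine via $\sigma_t^2\le(1+c^{-1})\sigma_s^2+(1+c)(\sigma_t-\sigma_s)^2$ averaged over $s\in W$, and use $J\le\log_2 t$. The main obstacle is matching the exact constants $4/t$ and $2\log^2 t$. The crude choice $c=1$ gives $2/N\le 16/(3t)$ and $\log_2^2 t=(\log t)^2/\log^22$, both slightly off. To recover the stated constants I would try three adjustments in this order:
\begin{enumerate}[label=(\alph*)]
\item Exploit the precise lower bound on $N$.
\item Use the sharper bit count, namely the number of nonzero bits instead of $J$, or a weighted Cauchy--Schwarz with weights $\propto 2^{-j/2}$ or uniform over only the active bits.
\item Handle small $t$ separately, where direct inequalities suffice.
\end{enumerate}
The structural argument above is the substance; the remaining work is constant bookkeeping.
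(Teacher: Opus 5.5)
\paragraph{There is a gap, and it is in the step you defer as bookkeeping.}
Your dyadic chain $u_j(d)$, the count of how often each $u_{j+1}$ occurs, and the check $u\ge t-N+1\ge\lceil t/4\rceil\ge r$ are all correct, so you do prove the inequality up to constants. But the constants $\frac4t$ and $2\log^2 t$ are part of the statement, and \cref{lem:gradient} uses them as stated. With them, your argument fails. The cause is that you average the \emph{squared} increments. Write $\bar\sigma\eqq\frac1N\sum_{s\in W}\sigma_s$. Then $\frac1N\sum_d(\sigma_t-\sigma_{t-d})^2=(\sigma_t-\bar\sigma)^2+\frac1N\sum_{s\in W}(\sigma_s-\bar\sigma)^2$, so your quantity carries an extra variance term.

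Concretely, at scale $j$ you get $\frac1N\sum_d\Delta_j(d)^2\le\frac12 m_j$, where $m_j$ is the $j$-th term inside the maximum. The bit-$j$ indicator is squared before it is averaged, giving $\frac12$ instead of $(\frac12)^2$. This factor $2$ cannot be absorbed elsewhere. Whenever $2^J$ is close to $3t/8$, which happens for arbitrarily large $t$, the weight $1+c^{-1}$ on $\sigma_s^2$ must be at most $4N/t\approx\frac32$. That forces the weight $1+c$ on $(\sigma_t-\sigma_s)^2$ to be about $3$, so the second coefficient is about $3\cdot\frac{J^2}{2}$.

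For example, $t=2728$ gives $J=10$ and $N=1024$, forcing $1+c\ge 2.994$. The second coefficient is then at least $149.7$, whereas $2\log^2 2728\approx125.2$. Because the failure occurs at large $t$, your remedies (a) and (c) cannot help. The natural version of (b), Cauchy--Schwarz over the active bits only, improves $\frac{J^2}{2}$ only to $\frac{J(3J+1)}{8}$. This still fails, e.g.\ at $t=174760$ ($J=16$), where it gives about $294>2\log^2 t\approx291.4$.

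\paragraph{The fix: average before squaring.}
Apply Young's inequality to $\sigma_t=\bar\sigma+(\sigma_t-\bar\sigma)$, and decompose $\sigma_t-\bar\sigma=\sum_{j<J}\frac1N\sum_d\Delta_j(d)$. By your own counting, the $j$-th summand equals $\frac1{2^{j+1}}\sum_{u=t-2^{j+1}+1}^{t-2^j}(\sigma_{u+2^j}-\sigma_u)$. This is half of an average of $2^j$ lag-$2^j$ differences, so Jensen bounds its square by $\frac14 m_j$ rather than your $\frac12 m_j$. Hence
\[
\sigma_t^2\le\tfrac32\bar\sigma^2+3J\sum_{j<J}\tfrac14 m_j\le\tfrac{3}{2N}\sum_{s\in W}\sigma_s^2+\tfrac{3J^2}{4}\max_{0\le j<J}m_j .
\]
Using $N>3t/8$ and $J\le\log_2 t$ then gives the coefficients $\frac4t$ and $\frac{3}{4\log^2 2}\log^2 t<2\log^2 t$. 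This is exactly the paper's proof: the $j$-th summand is $M_j-M_{j+1}$, where $M_j$ is the mean of the last $2^j$ elements.
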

The bound of \cref{lem:endpoint} involves only $J\le\log_2t$ comparisons, one for each lag $2^j$. These lags correspond to different time scales, from consecutive elements of the sequence ($j=0$) to elements about $t/2$ steps apart ($j=J-1$). The normalization by $2^j$ makes these costs comparable across the scales, and the factor $\log^2t$ comes from the number of scales. In contrast, previous last-iterate analyses~\citep{shamir13sgd,zamani2023exact,liu2024revisiting,attia2025fast} compare the last iterate with polynomially many past iterates, which results in additional polynomial factors compared to our analysis.

\begin{proof}[Proof of \cref{lem:endpoint}]
Fix $t$, $J$ and $r$ as in the statement, and real numbers $\sigma_1,\ldots,\sigma_t$. Since $1\leq\lceil t/4\rceil<t/4+1$, we have $2^J\leq t$ and $2^J>(t+1-\lceil t/4\rceil)/2>3t/8$, and as $r\leq\lceil t/4\rceil$, also $2^J\leq t+1-r$. We compare $\sigma_t$ with the average of the last $2^J$ elements. For $j\in\{0,\ldots,J\}$, let $M_j\eqq\frac1{2^j}\sum_{s=t-2^j+1}^t\sigma_s$, so that $M_0=\sigma_t$. For $j\in\{0,\ldots,J-1\}$ and $h=2^j$, $t-2h+1\geq r$, and since $M_j-M_{j+1}=\frac1{2h}\sum_{s=t-2h+1}^{t-h}(\sigma_{s+h}-\sigma_s)$, by Jensen inequality,
\begin{align*}
    &(M_j-M_{j+1})^2
    \leq
    \frac1{4h}\sum_{s=t-2h+1}^{t-h}(\sigma_{s+h}-\sigma_s)^2\leq
    \frac1{4h}\sum_{s=r}^{t-h}(\sigma_{s+h}-\sigma_s)^2. 
    \\&M_J^2
    \leq
    \frac1{2^J}\sum_{s=t-2^J+1}^t\sigma_s^2
    \leq
    \frac8{3t}\sum_{s=1}^t\sigma_s^2.
\end{align*}
Finally, $\sigma_t=M_J+\sum_{j=0}^{J-1}(M_j-M_{j+1})$, thus, by young's inequality and Cauchy--Schwarz,
\begin{align*}
    \sigma_t^2
   \leq
    \frac32M_J^2+3J\sum_{j=0}^{J-1}(M_j-M_{j+1})^2
    \leq
    \frac4t\sum_{s=1}^t\sigma_s^2+\frac{3J^2}4\max_{0\leq j<J}\frac1{2^j}\sum_{s=r}^{t-2^j}(\sigma_{s+2^j}-\sigma_s)^2.
\qedhere
\end{align*}
\end{proof}

\subsection{The comparison at a single scale}
\label{sec:seq-comparison}

The next step is to bound the comparison at a single lag $h$, namely $\sum_{s=r}^{t-h}(\sigma_{s+h}-\sigma_s)^2$, which appears in \cref{lem:endpoint}. The sum that defines this distance begins at the starting point $r$, which is common to all the lags. The following lemma bounds this distance by the expected squared distance between the iterates $x_r$ and $x_{r+h}$.
\begin{lemma}
\label{lem:comparison}
Under the conditions of \cref{thm:main}, let $\sigma_s$ be as defined in \cref{eq:sigma}. Then, for all integers $r,h\geq1$ and $t\geq r+h$,
\begin{align*}
    \sum_{s=r}^{t-h}(\sigma_{s+h}-\sigma_s)^2
    \leq
    \frac\beta{\eta(2-\beta\eta)}\E\norm{x_{r+h}-x_r}^2.
\end{align*}
\end{lemma}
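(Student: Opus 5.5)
The plan is a coupling argument with a \emph{virtual} SGD trajectory that skips a block of $h$ updates. That lets us write $\sigma_s$ as a second moment evaluated at time $s+h$, and the cocoercivity of smooth convex gradients then telescopes the squared gradient differences.

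\textbf{Virtual iterates.} Fix $r,h$. Define $x'_{r+h}\eqq x_r$ and, for $u\geq r+h$,
\begin{align*}
    x'_{u+1}=x'_u-\eta\nabla f_u(x'_u).
\end{align*}
That is, starting from $x_r$ we run SGD on the samples $z_{r+h},z_{r+h+1},\ldots$, which deletes the updates using $z_r,\ldots,z_{r+h-1}$. The iterate $x_r$ depends only on $z_1,\ldots,z_{r-1}$, and the samples are i.i.d. Hence, for every $s\geq r$, the pair $(x'_{s+h},z_{s+h})$ has the same joint distribution as $(x_s,z_s)$. In particular,
\begin{align*}
    \sigma_s=\bigl(\E\norm{\nabla f_{s+h}(x'_{s+h})}^2\bigr)^{1/2}.
\end{align*}
By Minkowski's inequality in $L^2$ (the reverse triangle inequality for the norm $(\E\norm{\cdot}^2)^{1/2}$), this gives
\begin{align*}
    (\sigma_{s+h}-\sigma_s)^2\leq\E\norm{\nabla f_{s+h}(x_{s+h})-\nabla f_{s+h}(x'_{s+h})}^2 .
\end{align*}
It therefore suffices to bound $\sum_{u=r+h}^{t}\E\norm{g_u-g'_u}^2$, where $g_u\eqq\nabla f_u(x_u)$ and $g'_u\eqq\nabla f_u(x'_u)$.

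\textbf{Contraction via cocoercivity.} The two trajectories $x_u$ and $x'_u$ use the same function $f_u$ at each step $u\geq r+h$. Let $d_u\eqq x_u-x'_u$. Expanding the update gives
\begin{align*}
    \norm{d_{u+1}}^2=\norm{d_u}^2-2\eta\ip{d_u}{g_u-g'_u}+\eta^2\norm{g_u-g'_u}^2 .
\end{align*}
Since $f_u$ is convex and $\beta$-smooth, its gradient is cocoercive: $\ip{d_u}{g_u-g'_u}\geq\frac1\beta\norm{g_u-g'_u}^2$. Substituting,
\begin{align*}
    \norm{d_{u+1}}^2\leq\norm{d_u}^2-\frac{\eta(2-\beta\eta)}{\beta}\norm{g_u-g'_u}^2 ,
\end{align*}
and the coefficient is positive because $\eta<2/\beta$. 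Telescoping from $u=r+h$ to $t$ and dropping $\norm{d_{t+1}}^2\geq0$ yields
\begin{align*}
    \sum_{u=r+h}^{t}\norm{g_u-g'_u}^2\leq\frac{\beta}{\eta(2-\beta\eta)}\norm{x_{r+h}-x_r}^2 ,
\end{align*}
using $d_{r+h}=x_{r+h}-x_r$. Taking expectations and re-indexing with $u=s+h$ (so $s$ runs from $r$ to $t-h$) gives exactly the claimed bound.

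\textbf{Main obstacle.} The delicate step is the distributional identity, namely checking that $(x'_{s+h},z_{s+h})$ and $(x_s,z_s)$ have the same joint law, so that $\sigma_s$ can be realized on the same probability space as $\sigma_{s+h}$ with shared samples. This holds because $x'_{s+h}$ is a fixed measurable function of $x_r$ and the samples $z_{r+h},\ldots,z_{s+h-1}$. These are independent of $x_r$ and i.i.d., exactly as $z_r,\ldots,z_{s-1}$ are for $x_s$, and $z_{s+h}$ is independent of all of them, just as $z_s$ is for $x_s$. Once this is in place, the rest is the standard nonexpansiveness computation.
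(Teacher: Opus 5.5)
Your proposal is correct and follows essentially the same route as the paper. Both proofs restart SGD from $x_r$ at time $r+h$ with shared samples, use the distributional identity to express $\sigma_s$ through the virtual iterate, and then telescope the cocoercivity-based contraction. Your reverse-triangle (Minkowski) step is equivalent to the paper's Cauchy--Schwarz expansion $\E\ip{u_s}{\tilde u_s}\leq\sigma_s\sigma_{s-h}$.
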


The proof of \cref{lem:comparison} compares SGD with \emph{virtual iterates}, obtained by deleting a block of updates. For integers $r,h\geq1$, the virtual iterates $\tilde x_s$, $s\geq r+h$, restart SGD from the past iterate $x_r$ at time $r+h$, namely,
\begin{align}\label{eq:restart}
    \tilde x_s
    \eqq
    \begin{cases}
        x_r, & s=r+h,\\
        \tilde x_{s-1}-\eta\nabla f_{s-1}(\tilde x_{s-1}), & s>r+h;
    \end{cases}
\end{align}
that is, $\tilde x$ is obtained from SGD by deleting the updates at steps $r,\ldots,r+h-1$, and from time $r+h$ on, it is updated with the same samples as SGD. The virtual iterates have two properties. First, in distribution, they are past iterates of SGD: since the samples are i.i.d., the pair $(\tilde x_s,z_s)$ is distributed as the pair $(x_{s-h},z_{s-h})$, and in particular, $\E\norm{\nabla f_s(\tilde x_s)}^2=\sigma_{s-h}^2$. Second, they are coupled with the current iterates of SGD: the virtual iterates and SGD are updated with the same samples, and as gradient steps with a common sample bring two points closer to each other, the virtual iterates remain close to the iterates of SGD. The first property expresses $\sigma_s$ through the virtual iterate $\tilde x_{s+h}$, and the second relates $\tilde x_{s+h}$ to the SGD iterate $x_{s+h}$, whose stochastic gradient has second moment $\sigma_{s+h}$.
Below we prove \cref{lem:comparison} by combining these two properties.
\begin{proof}[Proof of \cref{lem:comparison}]
Fix $r$, $h$ and $t$ as in the statement, let $\tilde x_s$, $s\geq r+h$, be the virtual iterates defined in \cref{eq:restart}, and for $s\geq r+h$, let $u_s\eqq\nabla f_s(x_s)$ and $\tilde u_s\eqq\nabla f_s(\tilde x_s)$. We prove the two inequalities
\begin{align}\label{eq:comparison-chain}
    \sum_{s=r}^{t-h}(\sigma_{s+h}-\sigma_s)^2
    \leq
    \sum_{s=r+h}^t\E\norm{u_s-\tilde u_s}^2
    \leq
    \frac\beta{\eta(2-\beta\eta)}\E\norm{x_{r+h}-\tilde x_{r+h}}^2,
\end{align}
which give the claim, as $\tilde x_{r+h}=x_r$. For the first inequality, we use the first property: $\E\norm{\tilde u_s}^2=\E\norm{\nabla f_{s-h}(x_{s-h})}^2=\sigma_{s-h}^2$ for every $s\geq r+h$, and thus, by the Cauchy--Schwarz inequality, $\E\ip{u_s}{\tilde u_s}\leq\sigma_s\sigma_{s-h}$; therefore,
\begin{align*}
    (\sigma_s-\sigma_{s-h})^2
    &=
    \E\norm{u_s}^2+\E\norm{\tilde u_s}^2-2\sigma_s\sigma_{s-h}
    \\&\leq
    \E\norm{u_s}^2+\E\norm{\tilde u_s}^2-2\E\ip{u_s}{\tilde u_s}
    =
    \E\norm{u_s-\tilde u_s}^2,
\end{align*}
and summing over $s=r+h,\ldots,t$, and noting that $\sum_{s=r}^{t-h}(\sigma_{s+h}-\sigma_s)^2=\sum_{s=r+h}^t(\sigma_s-\sigma_{s-h})^2$, gives the first inequality. For the second inequality, we use the second property: as both sequences are updated with the sample $z_s$ at every step $s\geq r+h$, we have $x_{s+1}-\tilde x_{s+1}=x_s-\tilde x_s-\eta(u_s-\tilde u_s)$, and by \cref{lem:cocoercive} (in \cref{app:aux}) we know that $\ip{u_s-\tilde u_s}{x_s-\tilde x_s}\geq\frac1\beta\norm{u_s-\tilde u_s}^2$, thus, 
\begin{align*}
    \norm{x_{s+1}-\tilde x_{s+1}}^2
    &=
    \norm{x_s-\tilde x_s}^2-2\eta\ip{u_s-\tilde u_s}{x_s-\tilde x_s}+\eta^2\norm{u_s-\tilde u_s}^2
    \\&\leq
    \norm{x_s-\tilde x_s}^2-\frac{\eta(2-\beta\eta)}\beta\norm{u_s-\tilde u_s}^2.
\end{align*}
Summing over $s=r+h,\ldots,t$, the right-hand side telescopes, and we obtain
\begin{align*}
    \frac{\eta(2-\beta\eta)}\beta\sum_{s=r+h}^t\norm{u_s-\tilde u_s}^2
    \leq
    \norm{x_{r+h}-\tilde x_{r+h}}^2;
\end{align*}
taking expectations gives the second inequality.
\end{proof}

\subsection{The distance between the iterates at the starting point}
\label{sec:seq-start}

Next, we bound the distances $\E\norm{x_{r+h}-x_r}^2$ between the iterates at the starting point $r$ of the compared sequences, which appear in \cref{lem:comparison}. The following lemma bounds them by the risk at the starting point.
\begin{lemma}
\label{lem:displacement}
Under the conditions of \cref{thm:main}, for all integers $r,h\geq1$,
\begin{align*}
    \E\norm{x_{r+h}-x_r}^2
    \leq
    \frac{4\eta h}{2-\beta\eta}\Bigl(\E[F(x_r)-F(x^\star)]+\eta\sigma_\star^2\Bigr).
\end{align*}
\end{lemma}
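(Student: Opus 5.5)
The plan is to track $D_k\eqq\E\norm{x_{r+k}-x_r}^2$ for $k=0,1,\ldots,h$, with $D_0=0$. I will show that each step increases it by at most $\frac{4\eta}{2-\beta\eta}\bigl(\E[F(x_r)-F(x^\star)]+\eta\sigma_\star^2\bigr)$; summing $h$ increments then gives the claim. Write $x=x_{r+k}$, $s=r+k$, $u=\nabla f_s(x)$, $a=\nabla f_s(x_r)$ and $g_\star=\nabla f_s(x^\star)$. The update $x_{s+1}=x-\eta u$ gives
\begin{align*}
    \norm{x_{s+1}-x_r}^2=\norm{x-x_r}^2-2\eta\ip{u}{x-x_r}+\eta^2\norm{u}^2 .
\end{align*}

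For the inner product I use convexity together with smoothness in the standard sharpened form (as in \cref{lem:cocoercive}):
\begin{align*}
    \ip{u}{x-x_r}\geq f_s(x)-f_s(x_r)+\tfrac1{2\beta}\norm{u-a}^2 .
\end{align*}
Both $x$ and $x_r$ depend only on $z_1,\ldots,z_{s-1}$, so $z_s$ is independent of them. Hence $\E f_s(x)=\E F(x)$ and $\E f_s(x_r)=\E F(x_r)$. The linear term therefore contributes at most
\begin{align*}
    2\eta\,\E[F(x_r)-F(x^\star)]-2\eta\,\E[F(x)-F(x^\star)]-\tfrac{\eta}{\beta}\E\norm{u-a}^2 .
\end{align*}
The last two terms form a negative budget, which must absorb $\eta^2\E\norm{u}^2$.

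The main obstacle is the full stepsize range $\beta\eta$ up to $2$. Neither budget alone suffices once $\beta\eta>1$, so I split $\norm{u}^2=\lambda\norm{u}^2+(1-\lambda)\norm{u}^2$ and treat the two parts separately.

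For the first part I use $\norm{u}^2\leq(1+c_1)\norm{u-g_\star}^2+(1+1/c_1)\norm{g_\star}^2$. I then apply the co-coercivity/Bregman bound $\E\norm{u-g_\star}^2\leq2\beta\,\E[F(x)-F(x^\star)]$, which uses $\E g_\star=\nabla F(x^\star)=0$.

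For the second part I use $\norm{u}^2\leq(1+c_2)\norm{u-a}^2+(1+1/c_2)\norm{a}^2$. Here $\E\norm{a}^2\leq4\beta\,\E[F(x_r)-F(x^\star)]+2\sigma_\star^2$, by the same Bregman bound at $x_r$ and independence of $z_s$ from $x_r$.

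The negative terms are cancelled provided
\begin{align*}
    \lambda(1+c_1)\leq\tfrac1{\beta\eta},\qquad(1-\lambda)(1+c_2)\leq\tfrac1{\beta\eta} .
\end{align*}
Adding the two gives the necessary condition $\lambda c_1+(1-\lambda)c_2\leq\frac{2-\beta\eta}{\beta\eta}$. This can be met for every $\beta\eta<2$: take $\lambda=\tfrac12$ and $c_1=c_2=\frac{2-\beta\eta}{\beta\eta}$, so that $1+c_i=\frac2{\beta\eta}$.

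With these choices the leftover terms are $\eta^2(1+1/c_i)$ times $\sigma_\star^2$ or $\beta\,\E[F(x_r)-F(x^\star)]$, and $1+1/c_i=\frac{2}{2-\beta\eta}$. Since $\beta\eta<2$, $\eta^2\beta\cdot\frac{2}{2-\beta\eta}\leq\frac{4\eta}{2-\beta\eta}$, so these terms are $O\bigl(\frac{\eta}{2-\beta\eta}\bigr)\E[F(x_r)-F(x^\star)]+O\bigl(\frac{\eta^2}{2-\beta\eta}\bigr)\sigma_\star^2$. Together with the $2\eta\,\E[F(x_r)-F(x^\star)]$ term, and using $2\leq\frac{4}{2-\beta\eta}$, the per-step increment of $D_k$ is $O\bigl(\frac{\eta}{2-\beta\eta}\bigr)\bigl(\E[F(x_r)-F(x^\star)]+\eta\sigma_\star^2\bigr)$.

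Telescoping from $k=0$ to $h-1$ gives the bound linear in $h$. The remaining work is to tune $\lambda,c_1,c_2$ (possibly asymmetrically) and the Young splits so that the total constant is exactly $4$, as in the statement.
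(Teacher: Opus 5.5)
Your skeleton is the paper's: the same one-step expansion of $\E\|x_{s+1}-x_r\|^2$, the same use of independence of $z_s$ from $(x_s,x_r)$, and the same co-coercivity bound producing the two budgets $-2\eta\E[F(x_s)-F(x^\star)]$ and $-\frac\eta\beta\E\|u-a\|^2$. The telescoping over $h$ steps is also the same. The gap is the constant, which is part of the statement and which you defer.

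With your explicit choice $\lambda=\frac12$, $c_1=c_2=\frac{2-\beta\eta}{\beta\eta}$, the per-step increment is
\[
\frac{2\eta(2+\beta\eta)}{2-\beta\eta}\,\E[F(x_r)-F(x^\star)]+\frac{3\eta^2}{2-\beta\eta}\,\sigma_\star^2 .
\]
So you have proved the lemma with $4$ replaced by $2(2+\beta\eta)$, i.e.\ up to $8$.

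Moreover, tuning $\lambda,c_1,c_2$ cannot fix this while you keep $\E\|a\|^2\le4\beta\E[F(x_r)-F(x^\star)]+2\sigma_\star^2$:
\begin{itemize}
\item A risk coefficient of $\frac{4\eta}{2-\beta\eta}$ requires $(1-\lambda)(1+1/c_2)\le\frac1{2(2-\beta\eta)}$.
\item Together with $(1-\lambda)(1+c_2)\le\frac1{\beta\eta}$, this forces $1-\lambda\le\frac1{4-\beta\eta}$.
\item Then $\lambda(1+c_1)\le\frac1{\beta\eta}$ makes the $\sigma_\star^2$ coefficient at least $\eta^2\frac{\lambda}{1-\beta\eta\lambda}\ge\frac{(3-\beta\eta)\eta^2}{(2-\beta\eta)^2}$.
\item This exceeds $\frac{4\eta^2}{2-\beta\eta}$ whenever $\beta\eta>5/3$.
\end{itemize}
The waste comes from splitting $\|u\|^2$ itself into two halves, each of which pays separately for $g_\star$. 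The second half pays through the lossy bound $\|a\|^2\le2\|a-g_\star\|^2+2\|g_\star\|^2$.

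The paper splits only once and then completes a square. By Young's inequality, $\eta^2\|u\|^2\le\bigl(\frac\eta\beta+\frac{\eta^2}2\bigr)\|u-g_\star\|^2+\frac{(2+\beta\eta)\eta^2}{2-\beta\eta}\|g_\star\|^2$. The part $\frac\eta\beta\E\|u-g_\star\|^2\le2\eta\E[F(x_s)-F(x^\star)]$ cancels the first budget. The part $\frac{\eta^2}2\|u-g_\star\|^2$ is absorbed by the second budget via
\[
\frac{\beta\eta}2\|u-g_\star\|^2-\|u-a\|^2\le\frac{\beta\eta}{2-\beta\eta}\|a-g_\star\|^2 ,
\]
followed by $\E\|a-g_\star\|^2\le2\beta\E[F(x_r)-F(x^\star)]$. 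This gives exactly $2\eta+\frac{2\beta\eta^2}{2-\beta\eta}=\frac{4\eta}{2-\beta\eta}$ on the risk, and $\frac{(2+\beta\eta)\eta^2}{2-\beta\eta}\le\frac{4\eta^2}{2-\beta\eta}$ on the noise.

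Your framework can reach the stated constants only if you also re-split $\|a\|^2$ asymmetrically and re-optimize all weights. Even then, the slack vanishes relative to the main term as $\beta\eta\to2$. For the paper's downstream use, a constant of $8$ would only change numerical constants, but it is not the inequality stated.
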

The bound of \cref{lem:displacement} grows only linearly with the lag $h$, and depends only on the risk at the starting point $x_r$. Hence, combined with \cref{lem:comparison} and after the normalization by the lag $2^j$ in \cref{lem:endpoint}, it gives the same bound for all the lags.
Below we prove \cref{lem:displacement} by tracking the squared distance to a fixed reference point, as in a regret analysis against $x_r$.  
\begin{proof}[Proof of \cref{lem:displacement}]
\allowdisplaybreaks
We treat $x_r$ as a fixed reference point, and track the squared distance to it, as in a regret analysis against $x_r$. We first bound the expected squared distance to a fixed point after one step. Let $x,y\in\R^d$, and denote by $\E_z$ the expectation over $z\sim\cZ$. Then, by \cref{lem:cocoercive,lem:selfbound} and Young's inequality,
\begin{align*}
    &\E_z\norm{x-\eta\nabla f(x;z)-y}^2
    \\&\quad=
    \norm{x-y}^2-2\eta\E_z\ip{\nabla f(x;z)}{x-y}+\eta^2\E_z\norm{\nabla f(x;z)}^2
    \\&\quad\leq
    \norm{x-y}^2-2\eta\bigl(F(x)-F(y)\bigr)-\frac\eta\beta\E_z\norm{\nabla f(x;z)-\nabla f(y;z)}^2+\eta^2\E_z\norm{\nabla f(x;z)}^2
    \\&\quad\leq
    \norm{x-y}^2-2\eta\bigl(F(x)-F(y)\bigr)-\frac\eta\beta\E_z\norm{\nabla f(x;z)-\nabla f(y;z)}^2
    \\&\quad\qquad
    +\Bigl(\frac\eta\beta+\frac{\eta^2}2\Bigr)\E_z\norm{\nabla f(x;z)-\nabla f(x^\star;z)}^2+\frac{4\eta^2}{2-\beta\eta}\sigma_\star^2
    \\&\quad\leq
    \norm{x-y}^2+2\eta\bigl(F(y)-F(x^\star)\bigr)+\frac{4\eta^2}{2-\beta\eta}\sigma_\star^2
    \\&\quad\qquad
    +\frac\eta\beta\E_z\Bigl[\frac{\beta\eta}2\norm{\nabla f(x;z)-\nabla f(x^\star;z)}^2-\norm{\nabla f(x;z)-\nabla f(y;z)}^2\Bigr]
    \\&\quad\leq
    \norm{x-y}^2+2\eta\bigl(F(y)-F(x^\star)\bigr)+\frac{4\eta^2}{2-\beta\eta}\sigma_\star^2+\frac{\eta^2}{2-\beta\eta}\E_z\norm{\nabla f(y;z)-\nabla f(x^\star;z)}^2
    \\&\quad\leq
    \norm{x-y}^2+\frac{4\eta}{2-\beta\eta}\bigl(F(y)-F(x^\star)\bigr)+\frac{4\eta^2}{2-\beta\eta}\sigma_\star^2.
\end{align*}
where the fact that for all $u,w\in\R^d$,
\begin{align*}
    \frac{\beta\eta}2\norm{u}^2-\norm{u-w}^2
    =
    -\frac{2-\beta\eta}2\Bigl\lVert u-\frac2{2-\beta\eta}w\Bigr\rVert^2+\frac{\beta\eta}{2-\beta\eta}\norm{w}^2
    \leq
    \frac{\beta\eta}{2-\beta\eta}\norm{w}^2.
\end{align*}

Now, for every $s\in\{r,\ldots,r+h-1\}$, both $x_s$ and $x_r$ are determined by $z_1,\ldots,z_{s-1}$, and are therefore independent of $z_s$; thus, applying the last bound conditionally on $z_1,\ldots,z_{s-1}$ with $x=x_s$ and $y=x_r$, and taking expectations,
\begin{align*}
    \E\norm{x_{s+1}-x_r}^2
    &=
    \E\norm{x_s-\eta\nabla f_s(x_s)-x_r}^2
    \\&\leq
    \E\norm{x_s-x_r}^2+\frac{4\eta}{2-\beta\eta}\Bigl(\E[F(x_r)-F(x^\star)]+\eta\sigma_\star^2\Bigr).
\end{align*}
Summing over $s=r,\ldots,r+h-1$, the right-hand side telescopes, and the claim follows.
\end{proof}

\subsection{Bounding the stochastic gradient at the last iterate}
\label{sec:seq-gradient}

We now bound the last element $\sigma_t$ of the sequence, as established in the following lemma, 
\begin{lemma}
\label{lem:gradient}
Under the conditions of \cref{thm:main}, for every $t\geq1$,
\begin{align}\label{eq:o-gradient}
    \E\norm{\nabla f_t(x_t)}^2
    \leq
    128\biggl(\frac{\beta\norm{x_1-x^\star}^2}{\eta(2-\beta\eta)^3t}+\frac{8\sigma_\star^2}{(2-\beta\eta)^4}\biggr)\log^2(et).
\end{align}
\end{lemma}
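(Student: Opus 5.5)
We combine \cref{lem:endpoint}, \cref{lem:comparison} and \cref{lem:displacement}, using a standard average-iterate bound for the two remaining quantities: the average $\frac1t\sum_{s\le t}\sigma_s^2$, and the risk $\E[F(x_r)-F(x^\star)]$ at a well-chosen starting point $r\le\lceil t/4\rceil$.

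\textbf{Step 1: average-iterate bounds.} We first establish two bounds,
\[
\sum_{s=1}^t\bigl(\E[F(x_s)-F(x^\star)]\bigr)\lesssim\frac{\norm{x_1-x^\star}^2}{\eta(2-\beta\eta)}+\frac{t\eta\sigma_\star^2}{(2-\beta\eta)}
\]
and
\[
\sum_{s=1}^t\sigma_s^2\lesssim\frac{\beta\norm{x_1-x^\star}^2}{\eta(2-\beta\eta)}+\frac{t\sigma_\star^2}{(2-\beta\eta)^2}.
\]
Both follow from the one-step inequality in the proof of \cref{lem:displacement} with $y=x^\star$, after rearranging so that the gain terms are kept rather than discarded. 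Concretely, expanding $\E_z\norm{x-\eta\nabla f(x;z)-x^\star}^2$ and using cocoercivity (\cref{lem:cocoercive}) gives a decrease of order $\eta(2-\beta\eta)\bigl(F(x)-F(x^\star)\bigr)$ plus $\frac{\eta(2-\beta\eta)}{\beta}\E\norm{\nabla f(x;z)-\nabla f(x^\star;z)}^2$, at a cost of $O(\eta^2\sigma_\star^2/(2-\beta\eta))$. We then use $\E\norm{\nabla f(x;z)}^2\le2\E\norm{\nabla f(x;z)-\nabla f(x^\star;z)}^2+2\sigma_\star^2$ and telescope. This is routine, though the powers of $2-\beta\eta$ must be tracked.

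\textbf{Step 2: choosing $r$ and assembling.} From the first sum, the minimum of $\E[F(x_s)-F(x^\star)]$ over $s\in\{1,\dots,\lceil t/4\rceil\}$ is at most $\frac4t$ times that sum, so we pick $r$ attaining it. This gives
\[
\E[F(x_r)-F(x^\star)]\lesssim\frac{\norm{x_1-x^\star}^2}{\eta(2-\beta\eta)t}+\frac{\eta\sigma_\star^2}{2-\beta\eta}.
\]
For each lag $h=2^j$ with $j<J$ (so $t\ge r+h$), \cref{lem:comparison} followed by \cref{lem:displacement} gives
\[
\frac1h\sum_{s=r}^{t-h}(\sigma_{s+h}-\sigma_s)^2\le\frac{4\beta}{(2-\beta\eta)^2}\Bigl(\E[F(x_r)-F(x^\star)]+\eta\sigma_\star^2\Bigr),
\]
which is uniform in $j$. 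Substituting the bound on $\E[F(x_r)-F(x^\star)]$ yields
\[
\frac{\beta\norm{x_1-x^\star}^2}{\eta(2-\beta\eta)^3t}+\frac{\beta\eta\sigma_\star^2}{(2-\beta\eta)^3}.
\]
Since $\beta\eta<2$, the second term is at most $\frac{2\sigma_\star^2}{(2-\beta\eta)^3}\le\frac{4\sigma_\star^2}{(2-\beta\eta)^4}$, using $2-\beta\eta\le2$. Plugging everything into \cref{lem:endpoint}, with $\log^2t\le\log^2(et)$ and the average term from Step 1 being dominated (its powers of $2-\beta\eta$ are smaller, and $2-\beta\eta\le 2$), gives \cref{eq:o-gradient}.

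\textbf{Main obstacle.} The main difficulty is bookkeeping: getting the constants (128 and 8) and the exact exponents of $2-\beta\eta$ right in Step 1. In particular, the one-step inequality must be rearranged so that the $\E[F(x)-F(x^\star)]$ decrease coefficient does not degenerate as $\eta\to2/\beta$. To achieve this, I would split the $-2\eta\ip{\nabla f}{x-x^\star}$ term as a convex combination of the convexity bound and the cocoercivity bound, with weight proportional to $2-\beta\eta$. The edge cases $J=0$ and small $t$, where the maximum vanishes or $\lceil t/4\rceil=1$, are handled directly by the average term.
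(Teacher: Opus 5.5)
Your proposal follows the paper's proof essentially step for step: \cref{lem:endpoint,lem:comparison,lem:displacement}, with $r\le\lceil t/4\rceil$ chosen as the risk-minimizing index, closed with an average-iterate bound. The paper does not re-derive the two sums: it cites \cref{lem:energy} for $\sum_{s\le t}\E[F(x_s)-F(x^\star)]$ and uses $\sigma_s^2\le4\beta\,\E[F(x_s)-F(x^\star)]+2\sigma_\star^2$ from \cref{lem:selfbound} instead of telescoping a second time, and with those constants the values $128$ and $8$ come out exactly (your $\lesssim$ leaves them open).

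One slip in your Step 1. Your one-step inequality has decrease rate $\eta(2-\beta\eta)$ and per-step cost $O(\eta^2\sigma_\star^2/(2-\beta\eta))$, so dividing gives a noise term of order $\eta t\sigma_\star^2/(2-\beta\eta)^2$ in the risk sum, not $\eta t\sigma_\star^2/(2-\beta\eta)$. The lag terms are then of order $\sigma_\star^2/(2-\beta\eta)^4$ directly. This is harmless for the statement, but the exponent $4$ is exactly what this argument produces rather than slack.
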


The lemma follows by combining \cref{lem:endpoint,lem:comparison,lem:displacement} with a choice of the starting point $r$ of the compared sequences.

\begin{proof}[Proof of \cref{lem:gradient}]
Fix $t\geq1$, and denote $A\eqq\frac\beta t\sum_{s=1}^t\E[F(x_s)-F(x^\star)]$; throughout, we use that $0<\beta\eta<2$. We bound the two terms of \cref{lem:endpoint}, applied to $\sigma_1,\ldots,\sigma_t$. First, by \cref{lem:selfbound}, $\frac4t\sum_{s=1}^t\sigma_s^2\leq16A+8\sigma_\star^2$. Second, we choose $r\leq\lceil t/4\rceil$ deterministically as an index whose expected excess risk is the smallest among the first $\lceil t/4\rceil$ iterates, so that $\E[F(x_r)-F(x^\star)]\leq4A/\beta$, as the minimum is at most the average. For every lag $2^j$ in \cref{lem:endpoint}, we have $t\geq r+2^j$ by the choice of $J$, and thus, by \cref{lem:comparison,lem:displacement} with $h=2^j$,so
\begin{align*}
    \frac1{2^j}\sum_{s=r}^{t-2^j}(\sigma_{s+2^j}-\sigma_s)^2
    &\leq
    \frac\beta{\eta(2-\beta\eta)}\cdot\frac1{2^j}\E\norm{x_{r+2^j}-x_r}^2
    \leq
    \frac4{(2-\beta\eta)^2}\bigl(4A+2\sigma_\star^2\bigr).
\end{align*}
Plugging these bounds into \cref{lem:endpoint}, and using $1+\frac12\log^2t\leq\log^2(et)$,
\begin{align*}
    \E\norm{\nabla f_t(x_t)}^2
    \leq
    16A+8\sigma_\star^2+\frac{8\log^2t}{(2-\beta\eta)^2}\bigl(4A+2\sigma_\star^2\bigr)
    \leq
    \frac{64\log^2(et)}{(2-\beta\eta)^2}\bigl(A+2\sigma_\star^2\bigr).
\end{align*}
Finally, by the average-iterate analysis (\cref{lem:energy}), $A+2\sigma_\star^2\leq2\bigl(\frac{\beta\norm{x_1-x^\star}^2}{\eta(2-\beta\eta)t}+\frac{8\sigma_\star^2}{(2-\beta\eta)^2}\bigr)$, which gives the claim.
\end{proof}

\subsection{From the stochastic gradients to the risk of the last iterate}
\label{sec:seq-risk}

In the final step, we convert the bound of \cref{lem:gradient} on the stochastic gradients into a bound on the risk of the last iterate. We first relate the risk of the last iterate to the average risk of the iterates and to the stochastic gradients along the iterates.
\begin{lemma}
\label{lem:conversion}
Under the conditions of \cref{thm:main}, for every $T\geq1$,
\begin{align*}
    \E[F(x_T)-F(x^\star)]
    \leq
    \frac1T\sum_{t=1}^T\E[F(x_t)-F(x^\star)]+\frac\eta2\Biggl(\E\norm{\nabla f_T(x_T)}^2+\sum_{t=1}^{T-1}\frac{\E\norm{\nabla f_t(x_t)}^2}{T-t}\Biggr).
\end{align*}
\end{lemma}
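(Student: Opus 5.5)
We follow the standard suffix-averaging argument for last iterates (in the spirit of \citet{orabona2020last}). The only properties used are convexity of each $f(\cdot;z)$ and independence of $z_t$ from the past. Throughout, write $F_t\eqq\E[F(x_t)-F(x^\star)]$ and $\sigma_t^2=\E\norm{\nabla f_t(x_t)}^2$ as in \cref{eq:sigma}.

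\emph{Step 1: a regret bound against a past iterate.} Fix $k\leq T$ and use the reference point $y=x_k$. For every $t\geq k$, both $x_t$ and $x_k$ are determined by $z_1,\ldots,z_{t-1}$, so by convexity
\[
\E\ip{\nabla f_t(x_t)}{x_t-x_k}\geq \E[F(x_t)-F(x_k)].
\]
Expanding the update gives
\[
\norm{x_{t+1}-x_k}^2=\norm{x_t-x_k}^2-2\eta\ip{\nabla f_t(x_t)}{x_t-x_k}+\eta^2\norm{\nabla f_t(x_t)}^2 .
\]
Taking expectations, summing over $t=k,\ldots,T$, using $\norm{x_k-x_k}=0$, and dropping the final nonnegative distance, we get
\[
\sum_{t=k}^T (F_t-F_k)\leq \frac\eta2\sum_{t=k}^T\sigma_t^2 .
\]

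\emph{Step 2: telescoping over suffix averages.} Let $m_k\eqq T-k+1$ and $S_k\eqq\frac1{m_k}\sum_{t=k}^T F_t$, so that $S_1$ is the full average and $S_T=F_T$. The identity $m_kS_k=m_{k+1}S_{k+1}+F_k$ gives
\[
m_{k+1}(S_{k+1}-S_k)=S_k-F_k .
\]
Step 1 says exactly that $m_k(S_k-F_k)\leq\frac\eta2\sum_{t=k}^T\sigma_t^2$. Combining the two,
\[
S_{k+1}-S_k\leq\frac{\eta}{2m_km_{k+1}}\sum_{t=k}^T\sigma_t^2 .
\]
Summing over $k=1,\ldots,T-1$ yields
\[
F_T\leq S_1+\frac\eta2\sum_{k=1}^{T-1}\sum_{t=k}^T\frac{\sigma_t^2}{m_km_{k+1}} .
\]

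\emph{Step 3: swapping the sums.} Exchange the order of summation. The coefficient of $\sigma_t^2$ is $\sum_{k=1}^{\min(t,T-1)}\frac1{m_km_{k+1}}$, and since $\frac1{m_km_{k+1}}=\frac1{m_{k+1}}-\frac1{m_k}$, this sum telescopes:
\begin{itemize}
\item for $t<T$ it equals $\frac1{m_{t+1}}-\frac1{m_1}\leq\frac1{T-t}$;
\item for $t=T$ it equals $\frac1{m_T}-\frac1{m_1}\leq1$.
\end{itemize}
Substituting these coefficients gives exactly the claimed inequality.

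The only delicate point is Step 1: the convexity inequality must be taken against the \emph{random} point $x_k$. This is legitimate because $x_k$ is measurable with respect to $z_1,\ldots,z_{t-1}$ for $t\geq k$, so we condition on the past before averaging over $z_t$. Everything after that is bookkeeping.
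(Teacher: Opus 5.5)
Your proof is correct and follows essentially the paper's route: the same regret bound of SGD against the past iterate $x_k$, then suffix averaging, and an exchange of summation with telescoping coefficients $\frac1{m_km_{k+1}}=\frac1{m_{k+1}}-\frac1{m_k}$. The only difference is that you derive the suffix-averaging step inline from $m_{k+1}(S_{k+1}-S_k)=S_k-F_k$, which is an identity, instead of citing \cref{lem:suffix}; as a minor bonus, this shows the nonnegativity of the $F_t$ is not needed.
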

The proof of \cref{lem:conversion} follows the suffix-averaging technique of \citet{shamir13sgd}: it combines the regret bound of SGD against past iterates with an inequality of \citet[Lemma~1]{orabona2020last}, which relates the last element of a sequence to its suffix averages (\cref{lem:suffix} in \cref{app:aux}).

\begin{proof}[Proof of \cref{lem:conversion}]
\allowdisplaybreaks
Fix $1\leq s\leq T$. For every $t\geq s$, by convexity of $f_t$ and the SGD update \cref{eq:sgd},
\begin{align*}
    f_t(x_t)-f_t(x_s)
    &\leq
    \ip{\nabla f_t(x_t)}{x_t-x_s}
    \\&=
    \frac1{2\eta}\Bigl(\norm{x_t-x_s}^2-\norm{x_t-\eta\nabla f_t(x_t)-x_s}^2\Bigr)+\frac\eta2\norm{\nabla f_t(x_t)}^2
    \\&=
    \frac1{2\eta}\bigl(\norm{x_t-x_s}^2-\norm{x_{t+1}-x_s}^2\bigr)+\frac\eta2\norm{\nabla f_t(x_t)}^2.
\end{align*}
Summing over $t=s,\ldots,T$, the right-hand side telescopes, and we have $$\sum_{t=s}^T(f_t(x_t)-f_t(x_s))\leq\frac\eta2\sum_{t=s}^T\norm{\nabla f_t(x_t)}^2.$$
Using $\E[f_t(x_t)-f_t(x_s)]=\E[f_t(x_t)-f_t(x^\star)]-\E[f_t(x_s)-f_t(x^\star)]=\E[F(x_t)-F(x_s)]$, we obtain, by taking expectations, that for every $1\leq s\leq T$,
\begin{align}\label{eq:regret}
    \sum_{t=s}^T\E[F(x_t)-F(x_s)]
    \leq
    \frac\eta2\sum_{t=s}^T\E\norm{\nabla f_t(x_t)}^2
\end{align}
Now, applying \cref{lem:suffix} with $v_t=\E[F(x_t)-F(x^\star)]$, which is nonnegative since $x^\star$ minimizes $F$ and the fact that $\sum_{k=a}^{b}\frac1{k(k+1)}=\frac1a-\frac1{b+1}\leq \frac{1}{a}$, 
\begin{align*}
    &\E[F(x_T)-F(x^\star)]
    \leq
    \frac1T\sum_{t=1}^T\E[F(x_t)-F(x^\star)]
    +\sum_{k=1}^{T-1}\frac1{k(k+1)}\sum_{t=T-k}^T\E[F(x_t)-F(x_{T-k})]
    \\&\leq
    \frac1T\sum_{t=1}^T\E[F(x_t)-F(x^\star)]+\frac\eta2\sum_{k=1}^{T-1}\frac1{k(k+1)}\sum_{t=T-k}^T\E\norm{\nabla f_t(x_t)}^2
    \tag{\cref{eq:regret} with $s=T-k$}
    \\&\leq\frac1T\sum_{t=1}^T\E[F(x_t)-F(x^\star)]
    +\frac\eta2\Biggl(\E\norm{\nabla f_T(x_T)}^2\sum_{k=1}^{T-1}\frac1{k(k+1)}+\sum_{t=1}^{T-1}\E\norm{\nabla f_t(x_t)}^2\sum_{k=T-t}^{T-1}\frac1{k(k+1)}\Biggr)
    \tag{exchanging the order of summation}
    \\&\leq
    \frac1T\sum_{t=1}^T\E[F(x_t)-F(x^\star)]
    +\frac\eta2\Biggl(\E\norm{\nabla f_T(x_T)}^2+\sum_{t=1}^{T-1}\frac{\E\norm{\nabla f_t(x_t)}^2}{T-t}\Biggr).\qedhere
\end{align*}
\end{proof}

Finally, we can prove \cref{thm:main}, by combining \cref{lem:conversion,lem:gradient} with the average-iterate bound of \cref{lem:energy}.

\begin{proof}[Proof of \cref{thm:main}]
\allowdisplaybreaks
Fix $T\geq1$. By \cref{lem:conversion,lem:energy},
\begin{align*}
    \E[F(x_T)-F(x^\star)]
    \leq
    \frac{2\norm{x_1-x^\star}^2}{\eta(2-\beta\eta)T}+\frac{8\eta\sigma_\star^2}{(2-\beta\eta)^2}
    +\frac\eta2\Biggl(\E\norm{\nabla f_T(x_T)}^2+\sum_{t=1}^{T-1}\frac{\E\norm{\nabla f_t(x_t)}^2}{T-t}\Biggr).
\end{align*}
To bound the last term, note that comparing the harmonic sum with an integral gives $\sum_{k=1}^{T-1}\frac1k\leq1+\log T=\log(eT)$ (where the sum is empty for $T=1$), and therefore
\begin{align*}
    \sum_{t=1}^{T-1}\frac1{t(T-t)}
    &=
    \frac1T\sum_{t=1}^{T-1}\Bigl(\frac1t+\frac1{T-t}\Bigr)
    \leq
    \frac{2\log(eT)}T,
   \qquad
    \sum_{t=1}^{T-1}\frac1{T-t}
    =
    \sum_{k=1}^{T-1}\frac1k
    \leq
    \log(eT).
\end{align*}
Thus, using \cref{lem:gradient} for every $t\leq T$,
\begin{align*}
    &\E\norm{\nabla f_T(x_T)}^2+\sum_{t=1}^{T-1}\frac{\E\norm{\nabla f_t(x_t)}^2}{T-t}
    \\&\qquad\leq
    128\log^2(eT)\Biggl(\frac{\beta\norm{x_1-x^\star}^2}{\eta(2-\beta\eta)^3}\Bigl(\frac1T+\sum_{t=1}^{T-1}\frac1{t(T-t)}\Bigr)+\frac{8\sigma_\star^2}{(2-\beta\eta)^4}\Bigl(1+\sum_{t=1}^{T-1}\frac1{T-t}\Bigr)\Biggr)
    \tag{\cref{lem:gradient}}
    \\&\qquad\leq
    128\log^3(eT)\biggl(\frac{3\beta\norm{x_1-x^\star}^2}{\eta(2-\beta\eta)^3T}+\frac{16\sigma_\star^2}{(2-\beta\eta)^4}\biggr).
\end{align*}
Combining everything gives,
\begin{align*}
    \E[F(x_T)-F(x^\star)]
    &\leq
    \frac{2\norm{x_1-x^\star}^2}{\eta(2-\beta\eta)T}+\frac{8\eta\sigma_\star^2}{(2-\beta\eta)^2}
    +64\eta\log^3(eT)\biggl(\frac{3\beta\norm{x_1-x^\star}^2}{\eta(2-\beta\eta)^3T}+\frac{16\sigma_\star^2}{(2-\beta\eta)^4}\biggr)
    \\&\leq
    \biggl(\frac{8\norm{x_1-x^\star}^2}{\eta(2-\beta\eta)^3T}+\frac{32\eta\sigma_\star^2}{(2-\beta\eta)^4}
    +\frac{384\norm{x_1-x^\star}^2}{\eta(2-\beta\eta)^3T}+\frac{1024\eta\sigma_\star^2}{(2-\beta\eta)^4}\biggr)\log^3(eT)
    \tag{$(2-\beta\eta)^2\leq4$, $\beta\eta\leq2$, $\log(eT)\geq1$}
    \\&\leq
    1056\biggl(\frac{\norm{x_1-x^\star}^2}{\eta(2-\beta\eta)^3T}+\frac{\eta\sigma_\star^2}{(2-\beta\eta)^4}\biggr)\log^3(eT). \qedhere
\end{align*}
\end{proof}

\section{Logarithm-free bounds for quadratic losses}
\label{sec:quadratic}

In this section, we prove our results for quadratic losses. 
Although our main result (\cref{thm:main}) captures the particular case of quadratic losses, we show that the logarithmic factors in \cref{thm:main} are not necessary for quadratic losses under interpolation. In particular, we establish the following theorem, 
(Here, $I$ denotes the identity matrix, and for symmetric matrices $A,B$, we write $A\preceq B$ if $B-A$ is positive semidefinite.)
\begin{theorem}
\label{thm:quadratic}
Under the conditions of \cref{thm:main}, assume in addition that for every $z\in Z$, there is a symmetric matrix $H_z$ with $0\preceq H_z\preceq\beta I$ such that $f(x;z)-f(x^\star;z)=\frac12(x-x^\star)^\top H_z(x-x^\star)$ for all $x\in\R^d$. Then, for every $t\geq1$,
\begin{align*}
    \E[F(x_t)-F(x^\star)]
    \leq
    \frac{(10-\beta\eta)\norm{x_1-x^\star}^2}{2\eta(2-\beta\eta)^3t}.
\end{align*}
In particular, for $\eta=\frac1\beta$, it holds that $\E[F(x_t)-F(x^\star)]\leq9\beta\norm{x_1-x^\star}^2/(2t)$.
\end{theorem}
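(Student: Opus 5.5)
The plan is to exploit the linear structure of the errors under interpolation. Write $e_t\eqq x_t-x^\star$. Since $\nabla f(x;z)=H_z(x-x^\star)$, we have $e_{t+1}=(I-\eta H_{z_t})e_t$, so the second-moment matrices $\Sigma_t\eqq\E[e_te_t^\top]$ evolve by a fixed linear map,
\begin{align*}
    \Sigma_{t+1}=\mathcal T(\Sigma_t),
    \qquad
    \mathcal T(\Sigma)\eqq\E_z\bigl[(I-\eta H_z)\Sigma(I-\eta H_z)\bigr].
\end{align*}
This map preserves the PSD cone and is self-adjoint with respect to the trace inner product. Every quantity of interest is then a linear functional of $\Sigma_t$:
\begin{align*}
    \E[F(x_t)-F(x^\star)]=\tfrac12\operatorname{tr}(\bar H\Sigma_t),
    \qquad
    \sigma_t^2=\operatorname{tr}(\E[H_z^2]\Sigma_t),
    \qquad
    \bar H\eqq\E H_z.
\end{align*}
The key identity is $(\mathcal I-\mathcal T)(I)=\eta G$ with $G\eqq2\bar H-\eta\E[H_z^2]$. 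Because $H_z^2\preceq\beta H_z$, we get $G\succeq(2-\beta\eta)\bar H$. Hence the energy $\psi_k\eqq\operatorname{tr}(\Sigma_k)$ is nonincreasing, and its decrements are
\begin{align*}
    \psi_k-\psi_{k+1}=\eta\operatorname{tr}(G\Sigma_k)\geq\eta(2-\beta\eta)\operatorname{tr}(\bar H\Sigma_k).
\end{align*}
Telescoping gives the average-iterate bound $\sum_k\operatorname{tr}(G\Sigma_k)\leq\norm{x_1-x^\star}^2/\eta$, which is the quadratic analogue of \cref{lem:energy}.

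The task is therefore to show that the decrement sequence $q_k\eqq\operatorname{tr}(G\Sigma_k)$ is essentially nonincreasing. That would give $q_t\lesssim\frac1t\sum_kq_k$ without any logarithm, and hence
\begin{align*}
    \E[F(x_t)-F(x^\star)]\leq\frac{q_t}{2(2-\beta\eta)}=O\Bigl(\frac{\norm{x_1-x^\star}^2}{\eta(2-\beta\eta)t}\Bigr)
\end{align*}
up to further powers of $2-\beta\eta$. This replaces the multiscale \cref{lem:endpoint}. The logarithmic factors there come from not knowing monotonicity, and in the linear case we can aim to get it from spectral calculus.

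Concretely, I would write $q_k=\eta^{-1}\langle(\mathcal I-\mathcal T)\mathcal T^{k-1}\Sigma_1,I\rangle$. I would then use the virtual-iterate coupling of \cref{lem:comparison}, now exact because the dynamics are linear. The difference between SGD and its lag-$h$ restart is itself an SGD run started from $x_{r+h}-x_r$, so the comparison quantities become quadratic forms $\langle\Delta,p(\mathcal T)\Delta\rangle$ of a single vector under a polynomial in the self-adjoint operator $\mathcal T$. For such forms, scalar inequalities over the spectrum replace the dyadic decomposition. For example, on $[0,1]$ one has $\lambda^{k}(1-\lambda)\leq\frac1{k+1}$ and $\lambda^{k+1}(1-\lambda)\leq\lambda^k(1-\lambda)$. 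The main steps, in order, are:
\begin{enumerate}[nosep,label=(\alph*),leftmargin=*]
    \item Show that the spectrum of $\mathcal T$ lies in $[-1,1]$. Moreover, on the part of the spectrum seen by the relevant vectors, the negative part is bounded away from $-1$ by an amount proportional to $2-\beta\eta$; this uses $\eta H_z\preceq\beta\eta I$ with $\beta\eta<2$.
    \item Pass to two steps, replacing $\mathcal T$ by $\mathcal T^2$, so that all relevant eigenvalues become nonnegative. Then $k\mapsto\lambda^{2k}(1-\lambda^2)$ is monotone and is bounded by $1/k$ times the telescoping sum.
    \item Convert back from the functional pairing $\Sigma_1$ with $I$ to the energy telescoping. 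Track constants to obtain $(10-\beta\eta)/(2\eta(2-\beta\eta)^3t)$.
\end{enumerate}

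The main obstacle I expect is that $q_k$ pairs the two \emph{different} PSD objects $\Sigma_1$ and $G$ (or $I$) through $\mathcal T^{k-1}$, so spectral monotonicity does not apply directly. It applies to a diagonal form $\langle v,p(\mathcal T)v\rangle$, not to a bilinear one. My first attempt is to symmetrize via the coupling described above: write $q_{k+h}-q_k$, or its one-sided bound, through the lag-$h$ difference process, whose second-moment matrix evolves by the same $\mathcal T$ from a single initial matrix. I would then bound it by a diagonal form and apply (b). If this fails, the fallback is positivity. Both $\mathcal T$ and $\mathcal I-\mathcal T^2$ preserve order on the relevant cone, which can be checked using $(I-\eta H_z)^2\preceq I$. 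So $q_{k+2}\leq q_k$ would follow from showing that $\mathcal T^2(G)\preceq G$ in the PSD order. I expect this inequality to cost the extra factors of $(2-\beta\eta)$ appearing in the statement.
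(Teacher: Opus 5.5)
You have the reduction right, and it is the paper's own endgame. With $d_s\eqq\E\|x_s-x^\star\|^2$, the bound $G\succeq(2-\beta\eta)\bar H$ gives $\E[F(x_t)-F(x^\star)]\le q_t/(2(2-\beta\eta))=(d_t-d_{t+1})/(2\eta(2-\beta\eta))$, which is what \cref{lem:quad-reduction,lem:quad-conversion} combine to. Everything therefore rests on the log-free bound $d_t-d_{t+1}\lesssim\|x_1-x^\star\|^2/t$, and the proposal has no working argument for it.

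(i) Spectral calculus for the self-adjoint map $\mathcal T$ controls only diagonal forms, but $q_t=\langle G,\mathcal T^{t-1}\Sigma_1\rangle$ pairs two different matrices. The obvious diagonalization, $\Sigma_1\preceq\|x_1-x^\star\|^2I$, gives $\eta^{-1}\|x_1-x^\star\|^2\langle I,(\mathcal I-\mathcal T)\mathcal T^{t-1}I\rangle$. Its spectral bound scales with $\|I\|_F^2=d$, so you pick up a dimension factor.

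(ii) The coupling route does not symmetrize anything. The second moment $D$ of the lag-$h$ difference does evolve by $\mathcal T$, but the quantities it controls, $\langle G,\mathcal T^mD\rangle$, are again pairings of two different matrices. The cross term between $x_s-x^\star$ and the difference still has to be split by Young's inequality. That is exactly the mechanism of \cref{lem:comparison}, and it is what forces the multiscale argument and its logarithms.

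(iii) The fallback is false. $(I-\eta H_z)^2\preceq I$ only gives $\mathcal T^2(\Sigma)\preceq\|\Sigma\|I$; it does not make $\mathcal I-\mathcal T^2$ a positive map. Moreover, $\mathcal T^2(G)\preceq G$ already fails for Kaczmarz. Take $\beta=\eta=1$, two equiprobable $H_z\in\{aa^\top,uu^\top\}$ with unit vectors $a,u$ at angle $\theta$, and $x_1-x^\star$ a unit vector orthogonal to their bisector. Then $G=\bar H$, and a direct computation gives
\begin{align*}
q_1=\sin^2(\theta/2),\qquad q_2=2\cos^4(\theta/2)\,q_1,\qquad q_3=\tfrac{1+\cos^2\theta}{2}\,q_2 .
\end{align*}
Hence $\langle\mathcal T^2(G),\Sigma_1\rangle=q_3\to2q_1$ as $\theta\to0$. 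In the non-commuting case the decrements are genuinely non-monotone, so no PSD-order statement about $G$ can give $q_{k+2}\le q_k$.

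The missing idea, in \cref{lem:quad-decrease}, works pathwise with the random products of $B_s\eqq I-\eta H_{z_s}$ rather than with the averaged map $\mathcal T$. For each $r\le t$, the iterate $\tilde x^{(r)}_{t+1}$ obtained by deleting step $r$ has the law of $x_t$. Expanding $\|\tilde x^{(r)}_{t+1}-x^\star\|^2$ around $x_{t+1}$ and averaging over $r$ gives
\begin{align*}
d_t-d_{t+1}=\frac1t\sum_{r=1}^t\E\|\tilde x^{(r)}_{t+1}-x_{t+1}\|^2+\frac2t\,\E\langle x_{t+1}-x^\star,S\rangle,\qquad S\eqq\sum_{r=1}^tB_t\cdots B_{r+1}(I-B_r)(x_r-x^\star).
\end{align*}
Averaging over the deletion position is the symmetrization you were looking for. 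The first sum is at most $\sum_r\E\|x_{r+1}-x_r\|^2$ by nonexpansiveness. The cross term is controlled by the almost-sure bound $\|S\|^2\le(\|x_1-x^\star\|^2-\|x_{t+1}-x^\star\|^2)/(2-\beta\eta)^2$. This is proved by pairing $S$ with a backward-propagated vector $y_r=B_ry_{r+1}$, applying Cauchy--Schwarz to the PSD form $\sum_ru_r^\top(I-B_r)w_r$, and telescoping both factors via $I-B_r\preceq(I-B_r^2)/(2-\beta\eta)$.

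The resulting bound carries an additive $\|x_1-x^\star\|^2/(2-\beta\eta)$ term, so it controls $d_t-d_{t+1}$ by $\|x_1-x^\star\|^2/t$, not by $\frac1t\sum_{k\le t}\eta q_k$. Your stronger target is not needed for the theorem.
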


The proof of \cref{thm:quadratic} is based on the following lemma, which bounds the one-step decrease of the squared distance to $x^\star$.
\begin{lemma}
\label{lem:quad-decrease}
Under the conditions of \cref{thm:quadratic}, for every $t\geq1$,
\begin{align*}
    \E\norm{x_t-x^\star}^2-\E\norm{x_{t+1}-x^\star}^2
    \leq
    \frac1t\biggl(\sum_{r=1}^t\E\norm{x_{r+1}-x_r}^2+\frac{\norm{x_1-x^\star}^2}{2-\beta\eta}\biggr).
\end{align*}
\end{lemma}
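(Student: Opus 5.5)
The plan is to exploit linearity. In the quadratic interpolation setting, write $e_t\eqq x_t-x^\star$ and $P_s\eqq I-\eta H_{z_s}$. The SGD update then reads $e_{t+1}=P_te_t$, and so $e_t=\Phi_{t-1:r}e_r$ with $\Phi_{t-1:r}\eqq P_{t-1}\cdots P_r$. Define the positive semidefinite matrix $Q\eqq\E_z[2\eta H_z-\eta^2H_z^2]\succeq\eta(2-\beta\eta)\E_zH_z$ and the seminorm $\|v\|_Q^2\eqq v^\top Qv$. By independence of $z_t$ from $e_t$, the one-step decrease is
\begin{align*}
d_t\eqq\E\|e_t\|^2-\E\|e_{t+1}\|^2=\E\|e_t\|_Q^2 .
\end{align*}
Two facts will be used repeatedly:
\begin{enumerate}[label=(\alph*)]
\item $\sum_{s\geq1}d_s\leq\|e_1\|^2$, by telescoping.
\item For any fixed vector $w$ propagated by the same samples, $\sum_{k\geq0}\E\|\Phi_kw\|_Q^2\leq\|w\|^2$, by the same telescoping. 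This is the linear analogue of the contraction step in \cref{lem:comparison}.
\end{enumerate}

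Next, mirror the virtual-iterate argument of \cref{lem:comparison} with lag $h=1$. For each $r\leq t-1$, set $\tilde e^{(r)}_t\eqq\Phi_{t-1:r+1}e_r$, which is SGD with step $r$ deleted. By the i.i.d.\ samples, $\tilde e_t^{(r)}$ has the law of $e_{t-1}$ jointly with the relevant samples. By linearity, $e_t-\tilde e^{(r)}_t=\Phi_{t-1:r+1}(x_{r+1}-x_r)$. More useful is the general deletion of the block $\{s,\ldots,t-1\}$ for $s<t$. Comparing $e_t=\Phi_{t-1:s}e_s$ with the deleted trajectory and expanding $\|\cdot\|_Q^2$ should yield an inequality of the form
\begin{align*}
d_t\leq d_s+\bigl(\text{cross and difference terms built from the propagated increments } x_{r+1}-x_r,\ r\in[s,t)\bigr).
\end{align*}
Summing over $s=1,\ldots,t$ and dividing by $t$, the $d_s$ terms give $\frac1t\sum_{s\le t}d_s$, which is at most $\frac{\|e_1\|^2}{t}$ by (a). The remaining terms should collapse, via (b) applied to each increment $w=x_{r+1}-x_r$, into $\frac1t\sum_{r\le t}\E\|x_{r+1}-x_r\|^2$. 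Each increment is counted once across scales because of the telescoping in (b), which is what should produce the constant $1$ in front of the displacement sum.

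The factor $\frac1{2-\beta\eta}$ in front of $\|x_1-x^\star\|^2$ should come from converting between the $Q$-seminorm and the quantities appearing in the displacements. We have $\E\|x_{r+1}-x_r\|^2=\eta^2\E\|H_{z_r}e_r\|^2\leq\beta\eta^2\E[e_r^\top\bar He_r]$, where $\bar H\eqq\E_zH_z$, and this is at most $\frac{\beta\eta}{2-\beta\eta}d_r$. So wherever the cross terms produce a quantity like $\E e_r^\top\bar He_r$ rather than $d_r$, I will use $\eta\,\E e_r^\top\bar He_r\leq\frac{d_r}{2-\beta\eta}$ and then (a).

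The main obstacle is the cross term $2\E\langle\tilde e,\,e-\tilde e\rangle_Q$ in the expansion $\|e\|_Q^2=\|\tilde e\|_Q^2+2\langle\tilde e,e-\tilde e\rangle_Q+\|e-\tilde e\|_Q^2$. A naive Young inequality loses a factor of $2$, and the lemma as stated does not allow that. My first attempt will be to avoid Young altogether: I will write $d_t-d_s$ as an exact telescoping sum over the deleted steps and use the symmetry of $Q$ and of each $H_z$ to identify the cross terms with expectations of $\langle x_{r+1}-x_r,\cdot\rangle$ that telescope under (b). If exactness fails, the fallback is to use monotonicity-type information such as $\E\|e_{t}\|_Q\leq\E\|e_{t-1}\|_Q$, obtained through the deletion coupling, to handle the cross term with constant one.
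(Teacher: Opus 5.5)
Your proposal has a genuine gap. The step that carries all the content, namely controlling the cross term, is left open, and the structure around it would not close even if that step partly worked.

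First, bounding $\frac1t\sum_{s\le t}(d_t-d_s)$ is, by (a), equivalent up to constants to the lemma itself. So the reduction ``$d_t\le d_s+(\cdots)$, then average over $s$'' gains nothing until the cross term is bounded. With the block $\{s,\ldots,t-1\}$ deleted, the deleted trajectory is simply $e_s$. The natural a priori bound on the difference term is $\E\|e_t-e_s\|_Q^2\le\|Q\|\,\E\|x_t-x_s\|^2$ together with \cref{lem:displacement}. It grows linearly in $t-s$ and is $O(1)$, not $O(1/t)$, after averaging over $s$; the other obvious bound, $2d_t+2d_s$, is circular. The cross term would have to cancel the difference term almost exactly, which is the whole difficulty.

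Second, the bookkeeping cannot give the stated constant. For $\beta\eta<1$, the $\|e_1\|^2/t$ you spend on $\frac1t\sum_s d_s$ already exceeds the budget $\|e_1\|^2/((2-\beta\eta)t)$. The remaining terms would then have to be negative by a definite amount, and nothing in your plan supplies that.

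Third, your fallback is false: the decrease is not monotone along SGD. Take randomized Kaczmarz ($\beta=\eta=1$, $Q=\E_zH_z$) with two equiprobable unit normals $a_1,a_2$, $a_1^\top a_2=c$, and $e_1=(a_1-a_2)/\|a_1-a_2\|$. Then $\|e_1\|_Q^2=(1-c)/2$, while $\|e_2\|_Q^2=(1-c)(1+c)^2/4$ on both branches. So both $\E\|e_t\|_Q$ and $\E\|e_t\|_Q^2$ increase from $t=1$ to $t=2$ once $c>\sqrt2-1$. Monotonicity would have given $d_t\le\|e_1\|^2/t$ at once.

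The missing idea is to use only lag-one deletions, measured in the plain norm at time $t+1$. Then the coupling expresses the decrease itself, instead of comparing decreases at different times. Since $\tilde e^{(r)}_{t+1}\eqq P_t\cdots P_{r+1}e_r$ has the law of $e_t$, for every $r\le t$
\[
\E\|e_t\|^2-\E\|e_{t+1}\|^2=2\E\ip{e_{t+1}}{\tilde e^{(r)}_{t+1}-e_{t+1}}+\E\|\tilde e^{(r)}_{t+1}-e_{t+1}\|^2 .
\]
The last term is at most $\E\|x_{r+1}-x_r\|^2$ by nonexpansiveness, which yields the displacement sum after averaging over $r$. The cross terms are not bounded one increment at a time; that is what (b) would do, and it loses the cancellations between different $r$. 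Instead they are aggregated into
\[
S\eqq\sum_{r\le t}\bigl(\tilde e^{(r)}_{t+1}-e_{t+1}\bigr)=\sum_{r\le t}P_t\cdots P_{r+1}(I-P_r)e_r ,
\]
for which one proves the pathwise bound $\|S\|^2\le(\|e_1\|^2-\|e_{t+1}\|^2)/(2-\beta\eta)^2$. The argument is to pair $S$ with a vector $v$ and propagate $v$ backward through the symmetric $P_s$. Then apply Cauchy--Schwarz to the positive semidefinite form $\sum_r u_r^\top(I-P_r)w_r$, and telescope both factors using $I-P_s\preceq\frac1{2-\beta\eta}(I-P_s^2)$. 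Your worry about Young losing a factor is resolved by weighting it with $\lambda=1/(2-\beta\eta)$:
\[
2\ip{e_{t+1}}{S}\le\frac{\|e_{t+1}\|^2}{2-\beta\eta}+(2-\beta\eta)\|S\|^2\le\frac{\|e_1\|^2}{2-\beta\eta},
\]
because the $-\|e_{t+1}\|^2$ in the bound on $\|S\|^2$ absorbs the first term exactly.
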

As the proof of \cref{lem:comparison} in the general case, the proof of \cref{lem:quad-decrease} uses the virtual iterates defined in \cref{sec:seq-comparison}. However, here, the proofs compares SGD with the virtual iterates associated with all the starting points $r\in\{1,\ldots,t\}$ simultaneously, with the lag $h=1$. In contrast to the general case, SGD and the virtual iterates are updated by the same linear maps; thus, the effects of the $t$ deleted updates combine linearly, and their total effect admits a bound that does not depend on $t$. 

\Cref{lem:quad-decrease} is closely related to the operator inequality of \citet[Theorem~3.1]{derezinski2026}: both start from the same deletion identity, and differ in the bound on the combined effect of the deletions, which \citet{derezinski2026} obtain by complex analysis, and which we obtain by standard convex optimization techniques that apply to every stepsize $0<\eta<2/\beta$.

As a consequence of \cref{lem:quad-decrease}, we obtain the following analogues of \cref{lem:endpoint,lem:conversion}, which do not involve logarithmic factors and their proofs are deferred to \cref{app:quadratic}.
\begin{lemma}
\label{lem:quad-reduction}
Under the conditions of \cref{thm:quadratic}, for every $t\geq1$,
\begin{align*}
    \sigma_t^2
    \leq
    \frac{\beta}{\eta(2-\beta\eta)t}\biggl(\sum_{r=1}^t\E\norm{x_{r+1}-x_r}^2+\frac{\norm{x_1-x^\star}^2}{2-\beta\eta}\biggr).
\end{align*}
\end{lemma}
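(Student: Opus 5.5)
Since the losses are quadratic with interpolation, the stochastic gradient at $x_t$ is $\nabla f_t(x_t)=H_{z_t}(x_t-x^\star)$. Hence $\sigma_t^2=\E\norm{H_{z_t}(x_t-x^\star)}^2$. The plan is to bound this quantity by the one-step decrease of the squared distance to $x^\star$, and then invoke \cref{lem:quad-decrease}.

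The key step is a cocoercivity bound for a single SGD step. Write $e_t\eqq x_t-x^\star$. The update reads $e_{t+1}=(I-\eta H_{z_t})e_t$. Expanding the square gives
\begin{align*}
    \norm{e_{t+1}}^2=\norm{e_t}^2-2\eta\, e_t^\top H_{z_t}e_t+\eta^2\norm{H_{z_t}e_t}^2 .
\end{align*}
Since $0\preceq H_z\preceq\beta I$, we have $H_z^2\preceq\beta H_z$, so $e_t^\top H_{z_t}e_t\geq\frac1\beta\norm{H_{z_t}e_t}^2$. This is also \cref{lem:cocoercive} applied with $y=x^\star$, where the gradient vanishes by interpolation. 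It follows that
\begin{align*}
    \norm{e_t}^2-\norm{e_{t+1}}^2\geq\frac{\eta(2-\beta\eta)}{\beta}\norm{H_{z_t}e_t}^2 .
\end{align*}
Taking expectations yields
\begin{align*}
    \sigma_t^2\leq\frac{\beta}{\eta(2-\beta\eta)}\Bigl(\E\norm{x_t-x^\star}^2-\E\norm{x_{t+1}-x^\star}^2\Bigr).
\end{align*}

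Plugging in \cref{lem:quad-decrease} to bound the right-hand difference gives exactly the claimed inequality, with the same constant $\frac{\beta}{\eta(2-\beta\eta)t}$ multiplying the bracket. The lemma therefore reduces entirely to \cref{lem:quad-decrease} together with this one-line cocoercivity argument. I expect no real obstacle here: the substance lives in \cref{lem:quad-decrease}, which may be assumed. The only point to check is that $\sigma_\star=0$ is used so that the gradient at $x^\star$ vanishes almost surely. This holds because the assumed form $f(x;z)-f(x^\star;z)=\frac12(x-x^\star)^\top H_z(x-x^\star)$ forces $\nabla f(x^\star;z)=0$ for every $z$.
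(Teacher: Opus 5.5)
Your proof is correct and takes essentially the same route as the paper: both establish $\E\|x_t-x^\star\|^2-\E\|x_{t+1}-x^\star\|^2\geq\frac{\eta(2-\beta\eta)}{\beta}\sigma_t^2$ and then apply \cref{lem:quad-decrease}. The only difference is cosmetic. The paper obtains this inequality through the excess risk, combining $\E\langle\nabla f_t(x_t),x_t-x^\star\rangle=2\E[F(x_t)-F(x^\star)]$ with the self-bounding bound $\sigma_t^2\leq2\beta\E[F(x_t)-F(x^\star)]$, whereas you derive the same inequality pathwise from $H_z^2\preceq\beta H_z$.
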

\begin{lemma}
\label{lem:quad-conversion}
Under the conditions of \cref{thm:quadratic}, for every $t\geq1$,
\begin{align*}
    \E[F(x_t)-F(x^\star)]
    \leq
    \frac\eta4\E\norm{\nabla f_t(x_t)}^2+\frac1{4\eta t}\biggl(\sum_{r=1}^t\E\norm{x_{r+1}-x_r}^2+\frac{\norm{x_1-x^\star}^2}{2-\beta\eta}\biggr).
\end{align*}
\end{lemma}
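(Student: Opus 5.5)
The plan is to turn the one-step change of $\E\norm{x_t-x^\star}^2$ into an exact identity that involves the excess risk of $x_t$. Then \cref{lem:quad-decrease} bounds the left-hand side of that identity directly, and no logarithmic factor appears.

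First, I will use the quadratic structure under interpolation. Since $f(x;z)-f(x^\star;z)=\frac12(x-x^\star)^\top H_z(x-x^\star)$, we have $\nabla f(x;z)=H_z(x-x^\star)$, and therefore
\[
\ip{\nabla f_t(x_t)}{x_t-x^\star}=(x_t-x^\star)^\top H_{z_t}(x_t-x^\star)=2\bigl(f_t(x_t)-f_t(x^\star)\bigr).
\]
The iterate $x_t$ depends only on $z_1,\ldots,z_{t-1}$, so it is independent of $z_t$. Taking expectations conditionally on $z_1,\ldots,z_{t-1}$ therefore gives $\E\ip{\nabla f_t(x_t)}{x_t-x^\star}=2\E[F(x_t)-F(x^\star)]$. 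This is an equality, not just the inequality that convexity would give.

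Second, I will expand the SGD step \cref{eq:sgd}:
\[
\norm{x_{t+1}-x^\star}^2=\norm{x_t-x^\star}^2-2\eta\ip{\nabla f_t(x_t)}{x_t-x^\star}+\eta^2\norm{\nabla f_t(x_t)}^2.
\]
Taking expectations and substituting the identity from the first step yields
\[
\E\norm{x_t-x^\star}^2-\E\norm{x_{t+1}-x^\star}^2=4\eta\,\E[F(x_t)-F(x^\star)]-\eta^2\E\norm{\nabla f_t(x_t)}^2 .
\]
Rearranging gives
\[
\E[F(x_t)-F(x^\star)]=\frac\eta4\E\norm{\nabla f_t(x_t)}^2+\frac1{4\eta}\bigl(\E\norm{x_t-x^\star}^2-\E\norm{x_{t+1}-x^\star}^2\bigr).
\]

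Third, I will bound the last bracket using \cref{lem:quad-decrease}. This gives exactly the claimed inequality.

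I do not expect a real obstacle, since all the difficulty is contained in \cref{lem:quad-decrease}. The only point that needs care is the conditioning argument that makes the inner product an exact multiple of the excess risk. That argument uses the interpolation assumption, through $\nabla f(x^\star;z)=0$ for every $z$, and the independence of $x_t$ from $z_t$.
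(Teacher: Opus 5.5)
Your proposal is correct and matches the paper's proof essentially step for step. Both derive the exact identity $\E[F(x_t)-F(x^\star)]=\frac{\eta}{4}\E\norm{\nabla f_t(x_t)}^2+\frac{1}{4\eta}\bigl(\E\norm{x_t-x^\star}^2-\E\norm{x_{t+1}-x^\star}^2\bigr)$ from the quadratic identity and the independence of $x_t$ from $z_t$, and then bound the difference of squared distances by \cref{lem:quad-decrease}.
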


Finally, the proof of \cref{thm:quadratic} is obtained by combining \cref{lem:quad-reduction,lem:quad-conversion} with the average-iterate analysis (\cref{lem:energy} in \cref{app:aux}).

\begin{proof}[Proof of \cref{thm:quadratic}]
Fix $t\geq1$, and denote by $B\eqq\sum_{r=1}^t\E\norm{x_{r+1}-x_r}^2+\frac{\norm{x_1-x^\star}^2}{2-\beta\eta}$.  Since $\sigma_\star=0$, by \cref{lem:displacement} with $h=1$ and \cref{lem:energy}, 
\begin{align*}
    B
    \leq
    \frac{8\norm{x_1-x^\star}^2}{(2-\beta\eta)^2}+\frac{\norm{x_1-x^\star}^2}{2-\beta\eta}
    =
    \frac{(10-\beta\eta)\norm{x_1-x^\star}^2}{(2-\beta\eta)^2}.
\end{align*}
By \cref{lem:quad-reduction}, we get that $\E\norm{\nabla f_t(x_t)}^2\leq\frac{\beta B}{\eta(2-\beta\eta)t}$, and plugging this into \cref{lem:quad-conversion} gives
\begin{align*}
    \E[F(x_t)-F(x^\star)]
    \leq
    \frac{\beta B}{4(2-\beta\eta)t}+\frac{B}{4\eta t}
    =
    \frac{B}{2\eta(2-\beta\eta)t}
    \leq
    \frac{(10-\beta\eta)\norm{x_1-x^\star}^2}{2\eta(2-\beta\eta)^3t}.
    \qedhere
\end{align*}
\end{proof}

\section*{Acknowledgments}

This project has received funding from the European Research Council (ERC) under the European
Union's Horizon 2020 research and innovation program (grant agreement No.\ 101078075). Views and
opinions expressed are however those of the author(s) only and do not necessarily reflect those of the
European Union or the European Research Council. Neither the European Union nor the granting
authority can be held responsible for them. This work received additional support from the Israel
Science Foundation (ISF; grant numbers 2549/19 and 3174/23), from the Council for Higher Education in Israel under a Moonshot Project, and a fellowship from the Tel Aviv University Center for AI and Data Science (TAD).

\bibliographystyle{plainnat}
\bibliography{matan_references}

\begin{thebibliography}{58}
\providecommand{\natexlab}[1]{#1}
\providecommand{\url}[1]{\texttt{#1}}
\expandafter\ifx\csname urlstyle\endcsname\relax
  \providecommand{\doi}[1]{doi: #1}\else
  \providecommand{\doi}{doi: \begingroup \urlstyle{rm}\Url}\fi

\bibitem[Attia et~al.(2026)Attia, Schliserman, Sherman, and Koren]{attia2025fast}
Amit Attia, Matan Schliserman, Uri Sherman, and Tomer Koren.
\newblock Fast last-iterate convergence of sgd in the smooth interpolation regime.
\newblock \emph{Advances in Neural Information Processing Systems}, 38:\penalty0 104951--104987, 2026.

\bibitem[Bach and Moulines(2013)]{bach2013non}
Francis Bach and Eric Moulines.
\newblock Non-strongly-convex smooth stochastic approximation with convergence rate ${O}(1/n)$.
\newblock \emph{Advances in neural information processing systems}, 26, 2013.

\bibitem[Berthier et~al.(2020)Berthier, Bach, and Gaillard]{berthier2020tight}
Rapha{\"e}l Berthier, Francis Bach, and Pierre Gaillard.
\newblock Tight nonparametric convergence rates for stochastic gradient descent under the noiseless linear model.
\newblock \emph{Advances in Neural Information Processing Systems}, 33:\penalty0 2576--2586, 2020.

\bibitem[Boyd et~al.(2003)Boyd, Dattorro, et~al.]{boyd2003alternating}
Stephen Boyd, Jon Dattorro, et~al.
\newblock Alternating projections.
\newblock \emph{EE392o, Stanford University}, 2003.

\bibitem[Cai and Diakonikolas(2025)]{cai2025lastIterate}
Xufeng Cai and Jelena Diakonikolas.
\newblock Last iterate convergence of incremental methods and applications in continual learning.
\newblock In \emph{The Thirteenth International Conference on Learning Representations}, 2025.

\bibitem[Cortild et~al.(2025)Cortild, Ketels, Peypouquet, and Garrigos]{cortild2025new}
Daniel Cortild, Lucas Ketels, Juan Peypouquet, and Guillaume Garrigos.
\newblock New tight bounds for {SGD} without variance assumption: A computer-aided {L}yapunov analysis.
\newblock \emph{arXiv preprint arXiv:2505.17965}, 2025.

\bibitem[Derezi{\'n}ski and Dong(2026{\natexlab{a}})]{derezinski2026}
Micha{\l} Derezi{\'n}ski and Xiaoyu Dong.
\newblock How {AI} settled the complexity of the oldest {SGD} algorithm.
\newblock \emph{arXiv preprint arXiv:2606.29593}, 2026{\natexlab{a}}.

\bibitem[Derezi{\'n}ski and Dong(2026{\natexlab{b}})]{derezinski2026colt}
Micha{\l} Derezi{\'n}ski and Xiaoyu Dong.
\newblock Last-iterate convergence of randomized {K}aczmarz and {SGD} with greedy step size.
\newblock In \emph{Conference on Learning Theory (COLT)}, 2026{\natexlab{b}}.
\newblock arXiv:2604.09909.

\bibitem[Doan et~al.(2021)Doan, Abbana~Bennani, Mazoure, Rabusseau, and Alquier]{doan2021NTKoverlap}
Thang Doan, Mehdi Abbana~Bennani, Bogdan Mazoure, Guillaume Rabusseau, and Pierre Alquier.
\newblock A theoretical analysis of catastrophic forgetting through the {NTK} overlap matrix.
\newblock In \emph{Proceedings of The 24th International Conference on Artificial Intelligence and Statistics}, pages 1072--1080, 2021.

\bibitem[Elfving(1980)]{elfving1980block}
Tommy Elfving.
\newblock Block-iterative methods for consistent and inconsistent linear equations.
\newblock \emph{Numerische Mathematik}, 35\penalty0 (1):\penalty0 1--12, 1980.

\bibitem[Evron et~al.(2022)Evron, Moroshko, Ward, Srebro, and Soudry]{evron2022catastrophic}
Itay Evron, Edward Moroshko, Rachel Ward, Nathan Srebro, and Daniel Soudry.
\newblock How catastrophic can catastrophic forgetting be in linear regression?
\newblock In \emph{Conference on Learning Theory (COLT)}, pages 4028--4079. PMLR, 2022.

\bibitem[Evron et~al.(2023)Evron, Moroshko, Buzaglo, Khriesh, Marjieh, Srebro, and Soudry]{evron23continualClassification}
Itay Evron, Edward Moroshko, Gon Buzaglo, Maroun Khriesh, Badea Marjieh, Nathan Srebro, and Daniel Soudry.
\newblock Continual learning in linear classification on separable data.
\newblock In \emph{International Conference on Machine Learning}, pages 9440--9484. PMLR, 2023.

\bibitem[Evron et~al.(2026)Evron, Levinstein, Schliserman, Sherman, Koren, Soudry, and Srebro]{evron2025better}
Itay Evron, Ran Levinstein, Matan Schliserman, Uri Sherman, Tomer Koren, Daniel Soudry, and Nathan Srebro.
\newblock From continual learning to sgd and back: Better rates for continual linear models.
\newblock In Matus Telgarsky and Jonathan Ullman, editors, \emph{Proceedings of The 37th International Conference on Algorithmic Learning Theory}, volume 313 of \emph{Proceedings of Machine Learning Research}, pages 1--50. PMLR, 23--26 Feb 2026.
\newblock URL \url{https://proceedings.mlr.press/v313/evron26a.html}.

\bibitem[Garrigos et~al.(2025)Garrigos, Cortild, Ketels, and Peypouquet]{garrigos2025last}
Guillaume Garrigos, Daniel Cortild, Lucas Ketels, and Juan Peypouquet.
\newblock Last-iterate complexity of {SGD} for convex and smooth stochastic problems.
\newblock \emph{arXiv preprint arXiv:2507.14122}, 2025.

\bibitem[Ge et~al.(2019)Ge, Kakade, Kidambi, and Netrapalli]{ge2019stepDecay}
Rong Ge, Sham~M Kakade, Rahul Kidambi, and Praneeth Netrapalli.
\newblock The step decay schedule: A near optimal, geometrically decaying learning rate procedure for least squares.
\newblock \emph{Advances in Neural Information Processing Systems}, 32, 2019.

\bibitem[Goldfarb and Hand(2023)]{goldfarb2023analysis}
Daniel Goldfarb and Paul Hand.
\newblock Analysis of catastrophic forgetting for random orthogonal transformation tasks in the overparameterized regime.
\newblock In \emph{International Conference on Artificial Intelligence and Statistics}, pages 2975--2993. PMLR, 2023.

\bibitem[Goldfarb et~al.(2024)Goldfarb, Evron, Weinberger, Soudry, and Hand]{goldfarb2024theJointEffect}
Daniel Goldfarb, Itay Evron, Nir Weinberger, Daniel Soudry, and Paul Hand.
\newblock The joint effect of task similarity and overparameterization on catastrophic forgetting - an analytical model.
\newblock In \emph{The Twelfth International Conference on Learning Representations}, 2024.

\bibitem[Gower and Richt{\'a}rik(2015)]{gower2015randomized}
Robert~M Gower and Peter Richt{\'a}rik.
\newblock Randomized iterative methods for linear systems.
\newblock \emph{SIAM Journal on Matrix Analysis and Applications}, 36\penalty0 (4):\penalty0 1660--1690, 2015.

\bibitem[Gubin et~al.(1967)Gubin, Polyak, and Raik]{gubin1967methodProjections}
LG~Gubin, Boris~T Polyak, and EV~Raik.
\newblock The method of projections for finding the common point of convex sets.
\newblock \emph{USSR Computational Mathematics and Mathematical Physics}, 7\penalty0 (6):\penalty0 1--24, 1967.

\bibitem[Hardt et~al.(2016)Hardt, Recht, and Singer]{hardt2016train}
Moritz Hardt, Ben Recht, and Yoram Singer.
\newblock Train faster, generalize better: Stability of stochastic gradient descent.
\newblock In \emph{International Conference on Machine Learning}, pages 1225--1234. PMLR, 2016.

\bibitem[Harvey et~al.(2019)Harvey, Liaw, Plan, and Randhawa]{harvey2019tight}
Nicholas~JA Harvey, Christopher Liaw, Yaniv Plan, and Sikander Randhawa.
\newblock Tight analyses for non-smooth stochastic gradient descent.
\newblock In \emph{Conference on Learning Theory}, pages 1579--1613. PMLR, 2019.

\bibitem[Hemati et~al.(2024)Hemati, Pellegrini, Duan, Zhao, Xia, Masana, Tscheschner, Veas, Zheng, Zhao, et~al.]{hemati2024continual}
Hamed Hemati, Lorenzo Pellegrini, Xiaotian Duan, Zixuan Zhao, Fangfang Xia, Marc Masana, Benedikt Tscheschner, Eduardo Veas, Yuxiang Zheng, Shiji Zhao, et~al.
\newblock Continual learning in the presence of repetition.
\newblock In \emph{CVPR Workshop on Continual Learning in Computer Vision}, 2024.

\bibitem[Hiratani(2024)]{hiratani2024disentangling}
Naoki Hiratani.
\newblock Disentangling and mitigating the impact of task similarity for continual learning.
\newblock In \emph{The Thirty-eighth Annual Conference on Neural Information Processing Systems}, 2024.

\bibitem[Jain et~al.(2019)Jain, Nagaraj, and Netrapalli]{jain2019making}
Prateek Jain, Dheeraj Nagaraj, and Praneeth Netrapalli.
\newblock Making the last iterate of {SGD} information theoretically optimal.
\newblock In \emph{Conference on Learning Theory}, pages 1752--1755. PMLR, 2019.

\bibitem[Jung et~al.(2025)Jung, Cho, and Yun]{jung2025convergence}
Hyunji Jung, Hanseul Cho, and Chulhee Yun.
\newblock Convergence and implicit bias of gradient descent on continual linear classification.
\newblock In \emph{The Thirteenth International Conference on Learning Representations}, 2025.

\bibitem[Kaczmarz(1937)]{karczmarz1937angenaherte}
S~Kaczmarz.
\newblock Angenaherte auflosung von systemen linearer glei-chungen.
\newblock \emph{Bull. Int. Acad. Pol. Sic. Let., Cl. Sci. Math. Nat.}, pages 355--357, 1937.

\bibitem[Kornowski and Shamir(2026)]{kornowski2026}
Guy Kornowski and Ohad Shamir.
\newblock Gradient descent's last iterate is often (slightly) suboptimal.
\newblock \emph{arXiv preprint arXiv:2604.13870}, 2026.

\bibitem[Lan(2012)]{lan2012optimal}
Guanghui Lan.
\newblock An optimal method for stochastic composite optimization.
\newblock \emph{Mathematical Programming}, 133\penalty0 (1):\penalty0 365--397, 2012.

\bibitem[Lei and Ying(2020)]{lei20stab}
Yunwen Lei and Yiming Ying.
\newblock Fine-grained analysis of stability and generalization for stochastic gradient descent.
\newblock In Hal~Daumé III and Aarti Singh, editors, \emph{Proceedings of the 37th International Conference on Machine Learning}, volume 119 of \emph{Proceedings of Machine Learning Research}, pages 5809--5819. PMLR, 13--18 Jul 2020.

\bibitem[Lesort et~al.(2023)Lesort, Ostapenko, Rodr{\'\i}guez, Misra, Arefin, Charlin, and Rish]{lesort2022scaling}
Timoth{\'e}e Lesort, Oleksiy Ostapenko, Pau Rodr{\'\i}guez, Diganta Misra, Md~Rifat Arefin, Laurent Charlin, and Irina Rish.
\newblock Challenging common assumptions about catastrophic forgetting and knowledge accumulation.
\newblock In \emph{Conference on Lifelong Learning Agents}, pages 43--65. PMLR, 2023.

\bibitem[Levinstein et~al.(2026)Levinstein, Attia, Schliserman, Sherman, Soudry, Koren, and Evron]{levinstein2025optimal}
Ran Levinstein, Amit Attia, Matan Schliserman, Uri Sherman, Daniel Soudry, Tomer Koren, and Itay Evron.
\newblock Optimal rates in continual linear regression via increasing regularization.
\newblock \emph{Advances in Neural Information Processing Systems}, 38:\penalty0 60456--60495, 2026.

\bibitem[Li et~al.(2023)Li, Wu, and Braverman]{li2023fixed}
Haoran Li, Jingfeng Wu, and Vladimir Braverman.
\newblock Fixed design analysis of regularization-based continual learning.
\newblock In \emph{Conference on lifelong learning agents}, pages 513--533. PMLR, 2023.

\bibitem[Lin et~al.(2023)Lin, Ju, Liang, and Shroff]{lin2023theory}
Sen Lin, Peizhong Ju, Yingbin Liang, and Ness Shroff.
\newblock Theory on forgetting and generalization of continual learning.
\newblock In \emph{International Conference on Machine Learning}, pages 21078--21100. PMLR, 2023.

\bibitem[Liu et~al.(2023)Liu, Drusvyatskiy, Belkin, Davis, and Ma]{liu2023aiming}
Chaoyue Liu, Dmitriy Drusvyatskiy, Misha Belkin, Damek Davis, and Yian Ma.
\newblock Aiming towards the minimizers: fast convergence of {SGD} for overparametrized problems.
\newblock \emph{Advances in Neural Information Processing Systems}, 36:\penalty0 60748--60767, 2023.

\bibitem[Liu(2026)]{liu2026random}
Zijian Liu.
\newblock Random reshuffling dominates stochastic gradient descent.
\newblock In Steve Hanneke and Tor Lattimore, editors, \emph{Proceedings of Thirty Ninth Conference on Learning Theory}, volume 336 of \emph{Proceedings of Machine Learning Research}, pages 4859--4882. PMLR, 29 Jun--03 Jul 2026.
\newblock URL \url{https://proceedings.mlr.press/v336/liu26d.html}.

\bibitem[Liu and Zhou(2024)]{liu2024revisiting}
Zijian Liu and Zhengyuan Zhou.
\newblock Revisiting the last-iterate convergence of stochastic gradient methods.
\newblock In \emph{The Twelfth International Conference on Learning Representations}, 2024.

\bibitem[Ma et~al.(2018)Ma, Bassily, and Belkin]{ma2018power}
Siyuan Ma, Raef Bassily, and Mikhail Belkin.
\newblock The power of interpolation: Understanding the effectiveness of {SGD} in modern over-parametrized learning.
\newblock In \emph{International Conference on Machine Learning}, pages 3325--3334. PMLR, 2018.

\bibitem[Mermillod et~al.(2013)Mermillod, Bugaiska, and Bonin]{mermillod2013stability}
Martial Mermillod, Aur{\'e}lia Bugaiska, and Patrick Bonin.
\newblock The stability-plasticity dilemma: Investigating the continuum from catastrophic forgetting to age-limited learning effects, 2013.

\bibitem[Needell and Tropp(2014)]{needell2014paved}
Deanna Needell and Joel~A Tropp.
\newblock Paved with good intentions: analysis of a randomized block {K}aczmarz method.
\newblock \emph{Linear Algebra and its Applications}, 441:\penalty0 199--221, 2014.

\bibitem[Needell et~al.(2014)Needell, Ward, and Srebro]{needell2014stochastic}
Deanna Needell, Rachel Ward, and Nati Srebro.
\newblock Stochastic gradient descent, weighted sampling, and the randomized {K}aczmarz algorithm.
\newblock \emph{Advances in Neural Information Processing Systems}, 27:\penalty0 1017--1025, 2014.

\bibitem[Nesterov(1998)]{nesterov1998introductory}
Yurii Nesterov.
\newblock Introductory lectures on convex programming volume {I}: Basic course.
\newblock \emph{Lecture notes}, 3\penalty0 (4):\penalty0 5, 1998.

\bibitem[Orabona(2020)]{orabona2020last}
Francesco Orabona.
\newblock Last iterate of {SGD} converges (even in unbounded domains).
\newblock Blog post, 2020.
\newblock URL \url{https://parameterfree.com/2020/08/07/last-iterate-of-sgd-converges-even-in-unbounded-domains}.

\bibitem[Oswald and Zhou(2015)]{oswald2015convergence}
Peter Oswald and Weiqi Zhou.
\newblock Convergence analysis for {K}aczmarz-type methods in a {H}ilbert space framework.
\newblock \emph{Linear Algebra and its Applications}, 478:\penalty0 131--161, 2015.

\bibitem[Peng et~al.(2023)Peng, Giampouras, and Vidal]{peng2023ideal}
Liangzu Peng, Paris Giampouras, and Ren{\'e} Vidal.
\newblock The ideal continual learner: An agent that never forgets.
\newblock In \emph{International Conference on Machine Learning}, 2023.

\bibitem[Rakhlin et~al.(2012)Rakhlin, Shamir, and Sridharan]{ICML2012Rakhlin_261}
Alexander Rakhlin, Ohad Shamir, and Karthik Sridharan.
\newblock Making gradient descent optimal for strongly convex stochastic optimization.
\newblock In \emph{International Conference on Machine Learning}, pages 449--456. PMLR, 2012.

\bibitem[Schliserman and Koren(2022)]{schliserman2022stability}
Matan Schliserman and Tomer Koren.
\newblock Stability vs implicit bias of gradient methods on separable data and beyond.
\newblock In \emph{Conference on Learning Theory}, pages 3380--3394. PMLR, 2022.

\bibitem[Schliserman et~al.(2026)Schliserman, Buzaglo, Evron, and Soudry]{schliserman2026convergence}
Matan Schliserman, Gon Buzaglo, Itay Evron, and Daniel Soudry.
\newblock Convergence of continual learning in homogeneous deep networks.
\newblock In Steve Hanneke and Tor Lattimore, editors, \emph{Proceedings of Thirty Ninth Conference on Learning Theory}, volume 336 of \emph{Proceedings of Machine Learning Research}, pages 5743--5784. PMLR, 29 Jun--03 Jul 2026.
\newblock URL \url{https://proceedings.mlr.press/v336/schliserman26a.html}.

\bibitem[Shamir and Zhang(2013)]{shamir13sgd}
Ohad Shamir and Tong Zhang.
\newblock Stochastic gradient descent for non-smooth optimization: Convergence results and optimal averaging schemes.
\newblock In Sanjoy Dasgupta and David McAllester, editors, \emph{Proceedings of the 30th International Conference on Machine Learning}, volume~28 of \emph{Proceedings of Machine Learning Research}, pages 71--79. PMLR, 2013.

\bibitem[Sherman et~al.(2021)Sherman, Koren, and Mansour]{sherman2021optimal}
Uri Sherman, Tomer Koren, and Yishay Mansour.
\newblock Optimal rates for random order online optimization.
\newblock \emph{Advances in Neural Information Processing Systems}, 34:\penalty0 2097--2108, 2021.

\bibitem[Srebro et~al.(2010)Srebro, Sridharan, and Tewari]{srebro2010smoothness}
Nathan Srebro, Karthik Sridharan, and Ambuj Tewari.
\newblock Smoothness, low noise and fast rates.
\newblock \emph{Advances in Neural Information Processing Systems}, 23, 2010.

\bibitem[Strohmer and Vershynin(2009)]{strohmer2009randomized}
Thomas Strohmer and Roman Vershynin.
\newblock A randomized {K}aczmarz algorithm with exponential convergence.
\newblock \emph{Journal of Fourier Analysis and Applications}, 15\penalty0 (2):\penalty0 262--278, 2009.

\bibitem[Swartworth et~al.(2023)Swartworth, Needell, Ward, Kong, and Jeong]{swartworth2023nearly}
William Swartworth, Deanna Needell, Rachel Ward, Mark Kong, and Halyun Jeong.
\newblock Nearly optimal bounds for cyclic forgetting.
\newblock \emph{Advances in neural information processing systems}, 36:\penalty0 68197--68206, 2023.

\bibitem[Varre et~al.(2021)Varre, Pillaud-Vivien, and Flammarion]{varre2021last}
Aditya~Vardhan Varre, Loucas Pillaud-Vivien, and Nicolas Flammarion.
\newblock Last iterate convergence of {SGD} for least-squares in the interpolation regime.
\newblock In A.~Beygelzimer, Y.~Dauphin, P.~Liang, and J.~Wortman Vaughan, editors, \emph{Advances in Neural Information Processing Systems}, 2021.

\bibitem[Vaswani et~al.(2019)Vaswani, Bach, and Schmidt]{vaswani2019fast}
Sharan Vaswani, Francis Bach, and Mark Schmidt.
\newblock Fast and faster convergence of {SGD} for over-parameterized models and an accelerated perceptron.
\newblock In \emph{The 22nd international conference on artificial intelligence and statistics}, pages 1195--1204. PMLR, 2019.

\bibitem[Wu et~al.(2022)Wu, Zou, Braverman, Gu, and Kakade]{wu2022last}
Jingfeng Wu, Difan Zou, Vladimir Braverman, Quanquan Gu, and Sham Kakade.
\newblock Last iterate risk bounds of {SGD} with decaying stepsize for overparameterized linear regression.
\newblock In \emph{International Conference on Machine Learning}, pages 24280--24314. PMLR, 2022.

\bibitem[Yu and Diakonikolas(2026)]{yu2026tight}
Runbo Yu and Jelena Diakonikolas.
\newblock Tight convergence bounds for the classical kaczmarz method.
\newblock \emph{arXiv preprint arXiv:2609.32034}, 2026.

\bibitem[Zamani and Glineur(2025)]{zamani2023exact}
Moslem Zamani and Fran{\c{c}}ois Glineur.
\newblock Exact convergence rate of the last iterate in subgradient methods.
\newblock \emph{SIAM Journal on Optimization}, 35\penalty0 (3):\penalty0 2182--2201, 2025.

\bibitem[Zhang et~al.(2023)Zhang, Teng, and Zhang]{zhang2023lower}
Peiyuan Zhang, Jiaye Teng, and Jingzhao Zhang.
\newblock Lower generalization bounds for gd and sgd in smooth stochastic convex optimization.
\newblock \emph{arXiv preprint arXiv:2303.10758}, 2023.

\end{thebibliography}

\clearpage
\appendix
\crefalias{section}{appendix}
\crefalias{subsection}{appendix}

\section{Applications}
\label{app:applications}

In this section, we apply our results to random projections onto convex sets, the randomized Kaczmarz method, and forgetting in continual learning. Each setting can be expressed as SGD in the interpolation regime by choosing an appropriate loss.

\subsection{Random projections and randomized Kaczmarz}
\label{app:projections}

The random projection method finds a point in the intersection of convex sets~\citep{gubin1967methodProjections,boyd2003alternating}. For a nonempty closed convex set $C\subseteq\R^d$, let $P_Cx$ denote the Euclidean projection of $x$ onto $C$, and write $\operatorname{dist}(x,C)\eqq\norm{x-P_Cx}$. Consider a family of such sets $C_z$, $z\in Z$, with a common point $x^\star\in C_z$ for almost every $z\sim\cZ$. Starting from $x_1\in\R^d$, the method draws $z_1,z_2,\ldots$ i.i.d.\ from $\cZ$ and updates
\begin{equation}
\label{eq:projection-update}
    x_{t+1}\gets P_{C_{z_t}}x_t,
\end{equation}
and returns the last iterate $x_T$.

To express this method as SGD, consider the losses $f(x;z)=\frac12\operatorname{dist}(x,C_z)^2$. Each loss is convex and differentiable, with gradient $\nabla f(x;z)=x-P_{C_z}x$. This gradient is $1$-Lipschitz because $I-P_{C_z}$ is firmly nonexpansive. Thus, \cref{eq:projection-update} is SGD with $\beta=1$ and the greedy stepsize $\eta=1$. The common solution $x^\star$ minimizes the losses almost surely, so $\sigma_\star=0$.

The identity $\norm{\nabla f(x;z)}^2=\operatorname{dist}(x,C_z)^2$ allows us to apply \cref{lem:gradient} directly. 
In particylar, we get the following corollary, which bounds the expected squared distance of the last iterate from a fresh random set.

\begin{corollary}
\label{cor:projections}
In the setting above, the random projection method (\cref{eq:projection-update}) satisfies, for every $T\geq1$,
\begin{align*}
    \E\bigl[\operatorname{dist}(x_T,C_{z_T})^2\bigr]
    \leq
    \frac{128\norm{x_1-x^\star}^2\log^2(eT)}{T}.
\end{align*}
\end{corollary}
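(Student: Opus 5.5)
The plan is to specialize \cref{lem:gradient} to the losses $f(x;z)=\frac12\operatorname{dist}(x,C_z)^2$, using $\beta=1$, $\eta=1$ and $\sigma_\star=0$, and then to identify the left-hand side.

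\emph{Hypotheses.} First I will confirm that \cref{thm:main} applies, since \cref{lem:gradient} is stated under its conditions.
\begin{itemize}
\item Each $f(\cdot;z)$ is convex and differentiable with gradient $x-P_{C_z}x$.
\item This gradient is $1$-Lipschitz because $I-P_{C_z}$ is firmly nonexpansive, so $\beta=1$.
\item The stepsize $\eta=1$ satisfies $0<\eta<2/\beta$, so $2-\beta\eta=1$.
\item The common point $x^\star$ lies in $C_z$ almost surely, so $f(x^\star;z)=0\le f(x;z)$ for all $x$. Hence $x^\star$ minimizes $F$, and $\nabla f(x^\star;z)=0$ a.s., which gives $\sigma_\star=0<\infty$.
\item The SGD step is $x_t-(x_t-P_{C_{z_t}}x_t)=P_{C_{z_t}}x_t$, which is exactly \cref{eq:projection-update}.
\end{itemize}

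\emph{Identifying the left-hand side.} Since $\nabla f(x;z)=x-P_{C_z}x$, we have $\norm{\nabla f_T(x_T)}^2=\operatorname{dist}(x_T,C_{z_T})^2$ pointwise. Therefore $\E[\operatorname{dist}(x_T,C_{z_T})^2]=\E\norm{\nabla f_T(x_T)}^2$.

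\emph{Applying \cref{lem:gradient}.} Apply the lemma with $t=T$. Plugging in $\beta=\eta=1$, $2-\beta\eta=1$ and $\sigma_\star=0$ gives the bound $128\,\norm{x_1-x^\star}^2\log^2(eT)/T$, which is the claim.

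There is no real obstacle. The only points needing care are the firm nonexpansiveness of $I-P_C$, which gives $1$-smoothness, and the fact that $x^\star$ is a minimizer of $F$. Both are standard.
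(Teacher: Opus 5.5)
Your proposal is correct and follows the same route as the paper: identify $\E\bigl[\operatorname{dist}(x_T,C_{z_T})^2\bigr]$ with $\E\norm{\nabla f_T(x_T)}^2$ and apply \cref{lem:gradient} with $\beta=\eta=1$ and $\sigma_\star=0$. Your additional verification of the hypotheses ($1$-smoothness via firm nonexpansiveness, and $x^\star$ minimizing $F$ with $\sigma_\star=0$) is exactly what the paper's setup paragraph supplies.
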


\begin{proof}
The left-hand side is $\E\norm{\nabla f_T(x_T)}^2$. Apply \cref{lem:gradient} with $\beta=\eta=1$ and $\sigma_\star=0$.
\end{proof}

The randomized Kaczmarz method solves a consistent linear system $Ax=b$ by projecting onto the solution set of one equation at a time~\citep{karczmarz1937angenaherte,strohmer2009randomized}. Let $a_1^\top,\ldots,a_n^\top$ be the nonzero rows of $A$. At each iteration $t$, the method independently draws a row index $i_t$ with probabilities $\pi_1,\ldots,\pi_n>0$ and updates
\begin{equation}
\label{eq:kaczmarz-update}
    x_{t+1}\gets x_t-\frac{a_{i_t}^\top x_t-b_{i_t}}{\norm{a_{i_t}}^2}\,a_{i_t}.
\end{equation}
This is the special case of \cref{eq:projection-update} in which the sets are the hyperplanes $\{x\colon a_i^\top x=b_i\}$. We measure the performance of the last iterate by its weighted residual, which admits a bound independent of the condition number of $A$.

Previous work (e.g. \cite{needell2014stochastic,evron2025better}) has shown that the randomized Kaczmarz method can be viewed as SGD with stepsize $\eta=1$ on the losses $\frac{(a_i^\top x-b_i)^2}{2\norm{a_i}^2}$. For any solution $x^\star$ of $Ax=b$, these losses equal $\frac12(x-x^\star)^\top P_i(x-x^\star)$, where $P_i\eqq a_ia_i^\top/\norm{a_i}^2$ is the orthogonal projection onto the span of $a_i$. 
Since the functions are quadratic with $0\preceq P_i\preceq I$, we can apply \cref{thm:quadratic} and obtain the following corollary.
\begin{corollary}[randomized Kaczmarz]
\label{cor:kaczmarz}
In the setting above, the randomized Kaczmarz method (\cref{eq:kaczmarz-update}) satisfies, for every $T\geq1$,
\begin{align*}
    \E\biggl[\sum_{i=1}^n\pi_i\frac{(a_i^\top x_T-b_i)^2}{\norm{a_i}^2}\biggr]
    \leq
    \frac{9\norm{x_1-x^\star}^2}{T}.
\end{align*}
In particular, if $\pi_i=\norm{a_i}^2/\norm A_F^2$ for all $i$, then $\E\norm{Ax_T-b}^2\leq9\norm A_F^2\norm{x_1-x^\star}^2/T$.
\end{corollary}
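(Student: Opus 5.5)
The plan is to reduce to \cref{thm:quadratic} with $\beta=\eta=1$. The remaining work is to check that the randomized Kaczmarz method is exactly SGD on the stated losses, and to identify the weighted residual with twice the excess risk.

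First, set $Z=\{1,\ldots,n\}$ with distribution $\cZ$ given by $\Pr[z=i]=\pi_i$, and take losses $f(x;i)=\frac{(a_i^\top x-b_i)^2}{2\norm{a_i}^2}$. Fix a solution $x^\star$ of $Ax=b$; then $b_i=a_i^\top x^\star$. Consequently $f(x;i)=\frac12(x-x^\star)^\top P_i(x-x^\star)$ with $P_i=a_ia_i^\top/\norm{a_i}^2$. The gradient is $\nabla f(x;i)=\frac{a_i^\top x-b_i}{\norm{a_i}^2}a_i$, so \cref{eq:kaczmarz-update} is exactly the SGD update \cref{eq:sgd} with $\eta=1$.

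Next, verify the hypotheses of \cref{thm:quadratic} and \cref{thm:main}. Since $P_i$ is an orthogonal projection, $0\preceq P_i\preceq I$. Thus each $f(\cdot;i)$ is convex and $1$-smooth, so we may take $\beta=1$. We have $f(x^\star;i)=0$, so $x^\star$ minimizes every loss and hence $F$, which gives $\sigma_\star=0$. The quadratic representation holds with $H_i=P_i$, and the stepsize satisfies $0<\eta=1<2=2/\beta$.

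Then \cref{thm:quadratic} with $\eta=1/\beta=1$ gives $\E[F(x_T)-F(x^\star)]\le \frac92\norm{x_1-x^\star}^2/T$. Since $F(x^\star)=0$, we have
\[
2F(x_T)=\sum_i\pi_i\frac{(a_i^\top x_T-b_i)^2}{\norm{a_i}^2}.
\]
Multiplying by $2$ yields the bound $9\norm{x_1-x^\star}^2/T$.

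For the particular case $\pi_i=\norm{a_i}^2/\norm A_F^2$, the weighted residual equals $\sum_i(a_i^\top x_T-b_i)^2/\norm A_F^2$. Only the nonzero rows are sampled, and zero rows contribute $b_i=0$ by consistency, so this sum is $\norm{Ax_T-b}^2/\norm A_F^2$. Rearranging gives the second claim.

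There is no real obstacle here. The only points needing care are that consistency makes $x^\star$ an exact interpolating solution, and the handling of zero rows in the identity $\norm{Ax_T-b}^2=\norm A_F^2\cdot(\text{weighted residual})$.
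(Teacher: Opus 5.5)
Your proposal is correct and follows the paper's proof. In both, Kaczmarz is SGD with $\beta=\eta=1$ on the quadratic losses $\frac12(x-x^\star)^\top P_i(x-x^\star)$; \cref{thm:quadratic} is applied, and the weighted residual is identified with $2(F(x_T)-F(x^\star))$, which equals $\norm{Ax_T-b}^2/\norm A_F^2$ when $\pi_i\propto\norm{a_i}^2$. Your remark that zero rows of $A$ contribute nothing to $\norm{Ax_T-b}^2$, by consistency, is a detail the paper leaves implicit.
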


\begin{proof}
The weighted residual is $2(F(x_T)-F(x^\star))$, so the first claim follows from \cref{thm:quadratic} with $\beta=\eta=1$. When $\pi_i=\norm{a_i}^2/\norm A_F^2$, this residual equals $\norm{Ax_T-b}^2/\norm A_F^2$, giving the second claim.
\end{proof}

The bound in \cref{cor:kaczmarz} improves upon the rates of \citet{evron2025better} and \citet{attia2025fast}, and matches the rate of \citet{derezinski2026}. The same argument applies to the block Kaczmarz method~\citep{elfving1980block}: projecting onto the solution set of a block of equations again corresponds to a quadratic loss whose Hessian is an orthogonal projection.

\subsection{Forgetting in continual learning}
\label{app:continual}

Continual learning of linear models has received considerable attention in recent years~\citep[e.g.,][]{doan2021NTKoverlap,evron2022catastrophic,lin2023theory,evron2025better,levinstein2025optimal}. We consider tasks $z_1,z_2,\ldots$ drawn i.i.d.\ from $\cZ$, with each task $z$ associated with a nonempty closed convex solution set $C_z$. Assume the tasks are jointly realizable: there is a point $x^\star\in C_z$ for almost every $z\sim\cZ$. Starting from $x_1\in\R^d$, the learner fits each task by projecting its current model onto the task's solution set:
\begin{equation}
\label{eq:continual-update}
    x_{t+1}\gets\argmin_{x\in C_{z_t}}\norm{x-x_t}^2=P_{C_{z_t}}x_t.
\end{equation}
This framework includes continual linear regression trained to convergence by gradient methods, where the sets $C_z$ are affine~\citep{evron2022catastrophic}, and weakly regularized continual linear classification on jointly separable data, where they are polyhedral~\citep{evron23continualClassification}. After $k$ tasks, the learner returns $x_{k+1}$. Following \citet{evron2022catastrophic}, we measure \emph{forgetting} by the average loss on the observed tasks ,defeined as, $
    \frac1k\sum_{t=1}^k\frac12\operatorname{dist}(x_{k+1},C_{z_t})^2.
$
Each task has zero loss immediately after it is learned, so this quantity also measures the average increase in loss since learning each task.

As in previous work~\citep{evron2022catastrophic,evron23continualClassification,evron2025better}, we view \cref{eq:continual-update} as SGD with stepsize $\eta=1$ on the losses $f(x;z)=\frac12\operatorname{dist}(x,C_z)^2$. This is the random projection method of \cref{app:projections}. Its risk bound concerns a fresh task, whereas forgetting is evaluated on tasks that determine $x_{k+1}$. \Cref{lem:seen} in \cref{app:aux} relates these two quantities. Combining it with our last-iterate bounds gives the following guarantee.

\begin{corollary}[forgetting under random task orderings]
\label{cor:forgetting}
In the setting above, the expected forgetting of the algorithm in \cref{eq:continual-update} satisfies, for every $k\geq1$,
\begin{align*}
    \E\biggl[\frac1k\sum_{t=1}^k\frac12\operatorname{dist}(x_{k+1},C_{z_t})^2\biggr]
    \leq
    \frac{129\norm{x_1-x^\star}^2\log^2(ek)}k.
\end{align*}
If the sets $C_z$ are affine subspaces, the left-hand side is at most $10\norm{x_1-x^\star}^2/k$.
\end{corollary}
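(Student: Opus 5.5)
The plan is to reduce forgetting on seen tasks to the fresh-task quantities that are already controlled. Those are $\sigma_s^2=\E\operatorname{dist}(x_s,C_{z_s})^2=\E\norm{\nabla f_s(x_s)}^2$, bounded by \cref{lem:gradient}, and their averages, bounded by \cref{lem:energy}. I would use a replace-one-sample coupling, in the spirit of the virtual iterates of \cref{sec:seq-comparison}; I expect this to be essentially the content of \cref{lem:seen}.

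Fix $k$ and $t\le k$. Let $z'$ be an independent fresh task, and let $x'_s$ be the trajectory that uses $z'$ in place of $z_t$ at step $t$ and the same tasks otherwise, so that $x'_s=x_s$ for $s\le t$. The coupled copy $x'_{k+1}$ is independent of $z_t$. Since $(x'_{k+1},z_t)$ has the law of $(x_{k+1},z_{k+1})$, we get $\E\operatorname{dist}(x'_{k+1},C_{z_t})^2=\sigma_{k+1}^2$. Projections are nonexpansive, so the gap created at step $t$ cannot grow:
\[
\norm{x_{k+1}-x'_{k+1}}\le\norm{P_{C_{z_t}}x_t-P_{C_{z'}}x_t}\le\operatorname{dist}(x_t,C_{z_t})+\operatorname{dist}(x_t,C_{z'}).
\]
The map $x\mapsto\operatorname{dist}(x,C)$ is $1$-Lipschitz. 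Combining this with Young's inequality, and using that $z'$ is independent of $x_t$ and distributed as $z_t$, gives
\[
\E\operatorname{dist}(x_{k+1},C_{z_t})^2\le2\sigma_{k+1}^2+2\E\bigl(\operatorname{dist}(x_t,C_{z_t})+\operatorname{dist}(x_t,C_{z'})\bigr)^2\le2\sigma_{k+1}^2+8\sigma_t^2.
\]

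Multiplying by $\frac12$ and averaging over $t\le k$ bounds the expected forgetting by $\sigma_{k+1}^2+\frac4k\sum_{t=1}^k\sigma_t^2$. For the first term, \cref{lem:gradient} with $\beta=\eta=1$ and $\sigma_\star=0$ gives $\sigma_{k+1}^2\le128\norm{x_1-x^\star}^2\log^2(e(k+1))/(k+1)$. For the second term, \cref{lem:selfbound} gives $\sigma_t^2\le2(F(x_t)-F(x^\star))$ in expectation, and \cref{lem:energy} then bounds the average by $O(\norm{x_1-x^\star}^2/k)$ with no logarithm. In the affine case each loss is quadratic with Hessian an orthogonal projection, so $\norm{\nabla f(x;z)}^2=2f(x;z)$ and $\sigma_{k+1}^2=2\E[F(x_{k+1})-F(x^\star)]$. \cref{thm:quadratic} then gives $\sigma_{k+1}^2\le9\norm{x_1-x^\star}^2/(k+1)$ without logarithms, and the averaged term is handled as before.

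The main obstacle is matching the stated constants $129$ and $10$. The crude Young split above, with cross-terms of size $8\sigma_t^2$, probably overshoots. The remaining slack is that $(k+1)^{-1}\log^2(e(k+1))\le k^{-1}\log^2(ek)$ up to a small factor, and that the averaged term is lower order. To absorb the excess, I would first try a sharper split. One option is $(a+b)^2\le(1+\epsilon)a^2+(1+1/\epsilon)b^2$ with $\epsilon$ of order $1/k$, which puts nearly all weight on $\sigma_{k+1}^2$. Another is to exploit that the squared distances over a sequence of projections telescope against $\norm{x_t-x^\star}^2$, which would give a tighter control of the gap term. The coupling structure itself should be robust.
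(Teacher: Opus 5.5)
Your replace-one coupling is valid and gives forgetting at most $\sigma_{k+1}^2+\frac4k\sum_{t\le k}\sigma_t^2$, so the rates are right. But the stated constants are not established, and the repairs you suggest do not establish them. Below, bounds are in units of $\|x_1-x^\star\|^2$.

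In the affine case your bound is $9/(k+1)+16/k$. You can replace \cref{lem:energy} by the telescoping bound $\sum_{t\le k}\sigma_t^2\le\|x_1-x^\star\|^2$. This is valid, since $\|x_{t+1}-x^\star\|^2\le\|x_t-x^\star\|^2-\|x_{t+1}-x_t\|^2$ for a projection onto a set containing $x^\star$. Even so, you are left with $9/(k+1)+4/k$, which exceeds $10/k$ for every $k\ge3$.

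Choosing $\epsilon$ of order $1/k$ is counterproductive. The gap term then carries weight of order $k$, so it becomes roughly $2\sum_t\sigma_t^2$, which does not decay in $k$ at all.

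For the logarithmic bound, the slack between $128$ and $129$ fails at $k=1,2$, where $\log^2(e(k+1))/(k+1)>\log^2(ek)/k$. These cases are trivial (the forgetting is $0$ at $k=1$), but they must be handled separately.

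The missing idea is the one behind \cref{lem:seen}, which the paper's proof simply invokes with $\beta=\eta=1$: \emph{delete} update $t$ instead of replacing it. Take the virtual iterate $\tilde x^{(t)}_{k+1}$ of \cref{eq:restart} with $r=t$ and $h=1$. It is independent of $z_t$, and the pair $(\tilde x^{(t)}_{k+1},z_t)$ is distributed as $(x_k,z_k)$. Nonexpansiveness gives $\|x_{k+1}-\tilde x^{(t)}_{k+1}\|\le\|x_{t+1}-x_t\|=\operatorname{dist}(x_t,C_{z_t})$, a single distance instead of your two. Hence $\frac12\operatorname{dist}(x_{k+1},C_{z_t})^2\le\operatorname{dist}(\tilde x^{(t)}_{k+1},C_{z_t})^2+\|x_{t+1}-x_t\|^2$. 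Averaging over $t$ with the telescoping bound gives forgetting at most $2\E[F(x_k)-F(x^\star)]+1/k$. Then \cref{cor:projections} and \cref{thm:quadratic} give $(128\log^2(ek)+1)/k$ and $(9+1)/k$ directly. Working at time $k$ rather than $k+1$ also removes the small-$k$ issue.

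Your replace-one coupling can be salvaged, but only with refinements you did not identify. Conditionally on $x_t$, the vectors $x_t-P_{C_{z_t}}x_t$ and $x_t-P_{C_{z'}}x_t$ are i.i.d., so $\E\|P_{C_{z_t}}x_t-P_{C_{z'}}x_t\|^2\le2\sigma_t^2$ rather than $4\sigma_t^2$. Combined with the telescoping bound and a constant $\epsilon=\sqrt2/3$, this gives a bound below $9.75/k$ in the affine case.
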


\begin{proof}
Apply \cref{lem:seen} with $\beta=\eta=1$. Since $F(x)-F(x^\star)=\frac12\E_{z\sim\cZ}\operatorname{dist}(x,C_z)^2$ and $x_k$ is independent of $z_k$, \cref{cor:projections} gives $\E[F(x_k)-F(x^\star)]\leq64\norm{x_1-x^\star}^2\log^2(ek)/k$. The first bound follows using $\log^2(ek)\geq1$.

For affine solution sets, the losses are quadratic and their Hessians are orthogonal projections. In this case, \cref{thm:quadratic} gives $\E[F(x_k)-F(x^\star)]\leq9\norm{x_1-x^\star}^2/(2k)$, and \cref{lem:seen} yields the bound $10\norm{x_1-x^\star}^2/k$.
\end{proof}

For continual linear regression with task data $(X_z,y_z)$, the solution set is $C_z=\{x\colon X_zx=y_z\}$. The original task loss satisfies
\begin{align*}
    \frac12\norm{X_zx-y_z}^2
    =\frac12\norm{X_z(x-P_{C_z}x)}^2
    \leq\norm{X_z}^2\cdot\frac12\operatorname{dist}(x,C_z)^2,
\end{align*}
where $\norm{X_z}$ is the operator norm. Thus, if $R\eqq\sup_z\norm{X_z}<\infty$ and $x_1=0$, the expected forgetting in the original losses is at most $10R^2\norm{x^\star}^2/k$. This improves upon the $O(k^{-1/4})$ rate of \citet{evron2025better} and agrees with the $O(1/k)$ rate implied by \citet{derezinski2026}.

\section{Auxiliary lemmas}
\label{app:aux}

In this section, we present the auxiliary lemmas used in our analysis, which are known or adapted from previous work.

\subsection{Average-iterate analysis}

The following bound on the regret of SGD, which is established in the proof of Theorem~8 of \citet{attia2025fast} (by Eq.~(13) there and convexity), for any stepsize $0<\eta<2/\beta$ in the low-noise regime.

\begin{lemma}[{\citealp[Theorem~8]{attia2025fast}}]
\label{lem:energy}
Under the conditions of \cref{thm:main}, for every $T\geq1$,
\begin{align*}
    \sum_{t=1}^T\E[F(x_t)-F(x^\star)]
    \leq
    \frac{2\norm{x_1-x^\star}^2}{\eta(2-\beta\eta)}+\frac{8\eta T}{(2-\beta\eta)^2}\sigma_\star^2.
\end{align*}
\end{lemma}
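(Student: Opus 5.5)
We prove the bound through a one-step energy inequality at the level of function values, which we then sum and telescope. Fix $t$ and condition on $z_1,\ldots,z_{t-1}$, so that $x_t$ is fixed and $z_t$ is a fresh sample. Expanding the update gives
\begin{align*}
    \E_{z_t}\norm{x_{t+1}-x^\star}^2
    =
    \norm{x_t-x^\star}^2-2\eta\E_{z_t}\ip{\nabla f_t(x_t)}{x_t-x^\star}+\eta^2\E_{z_t}\norm{\nabla f_t(x_t)}^2 .
\end{align*}
We control the cross term with the convexity-plus-smoothness (cocoercivity) inequality of \cref{lem:cocoercive}, applied between $x_t$ and $x^\star$:
\begin{align*}
    \ip{\nabla f(x;z)}{x-x^\star}
    \geq
    f(x;z)-f(x^\star;z)+\frac1{2\beta}\norm{\nabla f(x;z)-\nabla f(x^\star;z)}^2+\ip{\nabla f(x^\star;z)}{x-x^\star}.
\end{align*}
After taking $\E_z$, the last term vanishes because $\E_z\nabla f(x^\star;z)=\nabla F(x^\star)=0$.

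For the quadratic term we use Young's inequality,
\begin{align*}
    \norm{\nabla f(x;z)}^2\leq(1+\gamma)\norm{\nabla f(x;z)-\nabla f(x^\star;z)}^2+(1+1/\gamma)\norm{\nabla f(x^\star;z)}^2 ,
\end{align*}
and we choose $\gamma$ so that the coefficient of $\E_z\norm{\nabla f(x;z)-\nabla f(x^\star;z)}^2$ stays nonpositive. That coefficient is $-\eta/\beta+\eta^2(1+\gamma)$, and taking $1+\gamma=\frac{1}{\beta\eta}$ would cancel it exactly. This is where the main obstacle lies. When $\beta\eta\geq1$ that choice gives $\gamma\le0$, so it is unavailable, and we must keep only a fraction of the cocoercivity term.

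To handle this, we instead split the cross term. We use the plain convexity bound $\ip{\nabla f}{x-x^\star}\geq f(x)-f(x^\star)+\ip{\nabla f(x^\star)}{x-x^\star}$ with weight $\theta$, and the cocoercive bound with weight $1-\theta$. We then combine with the self-bounding \cref{lem:selfbound}, namely
\begin{align*}
    \E_z\norm{\nabla f(x;z)-\nabla f(x^\star;z)}^2\leq2\beta\bigl(F(x)-F(x^\star)\bigr),
\end{align*}
which converts the leftover gradient-difference term into excess risk. We pick $\theta$ and $\gamma$ proportional to $2-\beta\eta$, which should yield
\begin{align*}
    \E_{z_t}\norm{x_{t+1}-x^\star}^2
    \leq
    \norm{x_t-x^\star}^2-\frac{\eta(2-\beta\eta)}{2}\bigl(F(x_t)-F(x^\star)\bigr)+\frac{4\eta^2}{2-\beta\eta}\sigma_\star^2 .
\end{align*}
Getting the constants to close at exactly this form for every $\beta\eta\in(0,2)$ requires the careful bookkeeping. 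The noise coefficient comes from the factor $1+1/\gamma=O(1/(2-\beta\eta))$ in Young's inequality.

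Finally, we take full expectations and sum over $t=1,\ldots,T$. The distance terms telescope, and dropping $\E\norm{x_{T+1}-x^\star}^2\ge0$ leaves
\begin{align*}
    \frac{\eta(2-\beta\eta)}2\sum_{t=1}^T\E[F(x_t)-F(x^\star)]\leq\norm{x_1-x^\star}^2+\frac{4\eta^2 T}{2-\beta\eta}\sigma_\star^2 .
\end{align*}
Dividing by $\eta(2-\beta\eta)/2$ gives exactly the claimed bound, with constants $2$ and $8$.
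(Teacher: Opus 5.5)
Your route is the standard argument behind this bound: a one-step recursion for $\|x_t-x^\star\|^2$, cocoercivity for the cross term, Young's inequality for $\E_z\|\nabla f(x;z)\|^2$, the self-bounding property for the leftover, and telescoping. The paper does not reprove it and only cites \citet{attia2025fast}. However, the one nontrivial step, the one-step inequality, is asserted (``should yield'') rather than derived, and the plan you sketch for it is off.

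Write $\Delta=F(x)-F(x^\star)$ and $D=\E_z\|\nabla f(x;z)-\nabla f(x^\star;z)\|^2$. There is no obstacle at $\beta\eta\ge1$. You never need to cancel the coefficient of $D$; you only need to absorb a positive remainder through $D\le2\beta\Delta$. The $\theta$-split cannot help: in expectation it lower-bounds the cross term by $\Delta+\frac{1-\theta}{2\beta}D$, which is weaker than the cocoercive bound $\Delta+\frac1{2\beta}D$. Also, if ``proportional to $2-\beta\eta$'' means $\gamma=c(2-\beta\eta)$ with an absolute $c$, your displayed target cannot hold for all $\beta\eta\in(0,2)$. The noise constant $\frac{4}{2-\beta\eta}$ needs $c\ge\frac1{2+\beta\eta}$, hence $c\ge\frac12$ for small $\beta\eta$. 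For $\beta\eta>1$, the coefficient $\frac{\eta(2-\beta\eta)}2$ needs $\beta\eta\gamma+\theta\le\frac34(2-\beta\eta)$, hence $c\le\frac3{4\beta\eta}$, which tends to $\frac38$ as $\beta\eta\to2$.

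The repair is short. Take $\theta=0$ and $\gamma=\frac{2-\beta\eta}{2\beta\eta}$, a $\beta\eta$-dependent multiple of $2-\beta\eta$. Then $\eta^2(1+\gamma)=\frac\eta\beta+\frac{\eta^2}2$ and $\eta^2(1+\frac1\gamma)=\eta^2\frac{2+\beta\eta}{2-\beta\eta}\le\frac{4\eta^2}{2-\beta\eta}$; this is exactly the Young step in the proof of \cref{lem:displacement}. Conditionally on $z_1,\ldots,z_{t-1}$, with $x=x_t$,
\begin{align*}
\E_{z}\|x-\eta\nabla f(x;z)-x^\star\|^2
&\le\|x-x^\star\|^2-2\eta\Delta-\frac\eta\beta D+\Bigl(\frac\eta\beta+\frac{\eta^2}2\Bigr)D+\frac{4\eta^2}{2-\beta\eta}\sigma_\star^2
\\&=\|x-x^\star\|^2-2\eta\Delta+\frac{\eta^2}2D+\frac{4\eta^2}{2-\beta\eta}\sigma_\star^2
\\&\le\|x-x^\star\|^2-\eta(2-\beta\eta)\Delta+\frac{4\eta^2}{2-\beta\eta}\sigma_\star^2,
\end{align*}
where the last step uses $D\le2\beta\Delta$. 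This is your one-step inequality with twice the coefficient on $\Delta$. Telescoping gives the lemma even with constants $1$ and $4$ in place of $2$ and $8$.

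A minor point: your two displayed inequalities carrying the extra term $\langle\nabla f(x^\star;z),x-x^\star\rangle$ are false pointwise. For example, take $f(x;z)=\frac\beta2(x-z)^2$ with $z$ uniform on $\{\pm1\}$ and $x^\star=0$; in the cocoercive one, left side minus right side equals $\beta zx$, which is negative when $zx<0$. The correct pointwise forms have no such term; they come from \cref{lem:cocoercive} with the roles of $x$ and $x^\star$ exchanged, and from plain convexity. Since you use them only after taking $\E_z$, where that term vanishes, the argument is unaffected.
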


\subsection{A suffix-averaging inequality}

We use the following inequality, which is Lemma~1 of \citet{orabona2020last} with constant weights (Lemma~1 there is stated for non-increasing weights); it underlies the suffix-averaging technique of \citet{shamir13sgd} (see the proofs of Theorems~1 and~2 therein).

\begin{lemma}[{\citealp[Lemma~1]{orabona2020last}}]
\label{lem:suffix}
For every $T\geq1$ and nonnegative numbers $v_1,\ldots,v_T$,
\begin{align*}
    v_T
    \leq
    \frac1T\sum_{t=1}^Tv_t+\sum_{k=1}^{T-1}\frac1{k(k+1)}\sum_{t=T-k+1}^T(v_t-v_{T-k}).
\end{align*}
\end{lemma}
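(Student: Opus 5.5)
The plan is to prove the inequality as an exact identity, by telescoping over suffix averages. For $k\in\{1,\ldots,T\}$, define the suffix average
\[
S_k\eqq\frac1k\sum_{t=T-k+1}^Tv_t .
\]
Two boundary values anchor the argument: $S_1=v_T$, and $S_T=\frac1T\sum_{t=1}^Tv_t$. Writing $v_T=S_T+\sum_{k=1}^{T-1}(S_k-S_{k+1})$ reduces the lemma to computing each increment $S_k-S_{k+1}$ in closed form. For $T=1$ the sum over $k$ is empty and both sides equal $v_1$.

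The key step is that computation. From $(k+1)S_{k+1}=kS_k+v_{T-k}$ we get
\[
S_k-S_{k+1}=\frac{S_{k+1}-v_{T-k}}{k}=\frac1{k(k+1)}\sum_{t=T-k}^T\bigl(v_t-v_{T-k}\bigr).
\]
The summand with $t=T-k$ vanishes, so the inner sum may start at $t=T-k+1$, which matches the form in the statement. Summing over $k=1,\ldots,T-1$ and adding $S_T$ then gives the claim with equality.

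Since the argument is purely algebraic, I expect no real obstacle. The only care needed is the index bookkeeping: checking that the range $t=T-k+1,\ldots,T$ and the weight $\frac1{k(k+1)}$ come out exactly as stated. Nonnegativity of the $v_t$ is not needed for the identity itself. It matters only downstream, in \cref{lem:conversion}, where the terms $\sum_{k}\frac1{k(k+1)}$ are bounded by $1/a$.
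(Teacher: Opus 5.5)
Your proof is correct and matches the standard proof of the cited result. The paper gives no proof of its own and defers to \citet[Lemma~1]{orabona2020last}, whose argument uses the same suffix averages and the same recursion $kS_k=(k+1)S_{k+1}-v_{T-k}$. There an inequality appears only because the weights are non-increasing, so with constant weights the statement holds with equality, exactly as you found.

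One small correction to your closing remark: nonnegativity of the $v_t$ is not used in \cref{lem:conversion} either. There the bound $\sum_{k=T-t}^{T-1}\frac1{k(k+1)}\leq\frac1{T-t}$ is multiplied by $\E\|\nabla f_t(x_t)\|^2\geq0$, which is the nonnegativity that actually matters. Nonnegativity of the $v_t$ is needed only in Orabona's general version with non-increasing weights.
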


\subsection{Properties of smooth convex functions}

We use the following standard property of convex and smooth functions given at \citet{nesterov1998introductory}.

\begin{lemma}[{\citealp[Theorem~2.1.5]{nesterov1998introductory}}]
\label{lem:cocoercive}
Under the conditions of \cref{thm:main}, for all $x,y\in\R^d$ and $z\in Z$,
\begin{align*}
    f(x;z)
    \geq
    f(y;z)+\ip{\nabla f(y;z)}{x-y}+\frac1{2\beta}\norm{\nabla f(x;z)-\nabla f(y;z)}^2.
\end{align*}
\end{lemma}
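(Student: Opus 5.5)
The statement is the standard co-coercivity inequality for a convex $\beta$-smooth function $g=f(\cdot;z)$. I will prove it by reducing to the case $y$ a minimizer via an auxiliary function. Fix $y$ and define
\[
\phi(x)\eqq g(x)-\ip{\nabla g(y)}{x}.
\]
Then $\phi$ is convex and $\beta$-smooth, since it differs from $g$ by a linear term. Moreover $\nabla\phi(y)=0$, so $y$ is a global minimizer of $\phi$ by convexity. The claimed inequality is equivalent to
\[
\phi(x)-\phi(y)\geq\frac1{2\beta}\norm{\nabla\phi(x)}^2,
\]
because $\phi(x)-\phi(y)=g(x)-g(y)-\ip{\nabla g(y)}{x-y}$ and $\nabla\phi(x)=\nabla g(x)-\nabla g(y)$.

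The first ingredient is the descent lemma for $\beta$-smooth functions:
\[
\phi(w)\leq\phi(x)+\ip{\nabla\phi(x)}{w-x}+\frac\beta2\norm{w-x}^2 .
\]
I obtain it by writing $\phi(w)-\phi(x)=\int_0^1\ip{\nabla\phi(x+\tau(w-x))}{w-x}\,d\tau$, subtracting $\ip{\nabla\phi(x)}{w-x}$, and bounding the integrand with Cauchy--Schwarz and the Lipschitz property of the gradient. This contributes $\int_0^1\beta\tau\norm{w-x}^2\,d\tau=\frac\beta2\norm{w-x}^2$.

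The second ingredient is to apply the descent lemma at the gradient step $w=x-\frac1\beta\nabla\phi(x)$. Since $y$ minimizes $\phi$, this gives
\[
\phi(y)\leq\phi(w)\leq\phi(x)-\frac1{2\beta}\norm{\nabla\phi(x)}^2 .
\]
Rearranging gives exactly the reduced inequality, and translating back through the definition of $\phi$ yields the lemma for every $z\in Z$.

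I do not expect any step to be a real obstacle. The only point needing care is noticing that the minimizer of $\phi$ is exactly $y$. This is what converts the ``optimality gap controls gradient norm'' bound into the two-point form with the gradient difference. It uses convexity of $g$, while the descent lemma uses only smoothness.
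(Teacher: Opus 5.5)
Your proof is correct, and it is the standard argument behind the cited result (Nesterov, Theorem~2.1.5). You subtract the linear term $\langle \nabla f(y;z), x\rangle$ so that $y$ becomes a minimizer of a convex $\beta$-smooth function $\phi$, and then bound $\phi(y)$ by the value of $\phi$ after the gradient step $x-\frac1\beta\nabla\phi(x)$ using the descent lemma. The paper gives no proof of its own and simply cites this result, so there is nothing further to reconcile.
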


We also use the following self-bounding property of smooth losses, (e.g. \citet{srebro2010smoothness}).

\begin{lemma}[{\citealp{srebro2010smoothness}}]
\label{lem:selfbound}
Under the conditions of \cref{thm:main}, $\E_{z\sim\cZ}\nabla f(x^\star;z)=\nabla F(x^\star)=0$, and for every $x\in\R^d$,
\begin{align*}
    \E_{z\sim\cZ}\norm{\nabla f(x;z)-\nabla f(x^\star;z)}^2
    &\leq
    2\beta\bigl(F(x)-F(x^\star)\bigr),
    \\
    \E_{z\sim\cZ}\norm{\nabla f(x;z)}^2
    &\leq
    4\beta\bigl(F(x)-F(x^\star)\bigr)+2\sigma_\star^2.
\end{align*}
\end{lemma}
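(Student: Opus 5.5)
The plan is to derive both inequalities from the cocoercivity inequality of \cref{lem:cocoercive}, applied with the reference point $y=x^\star$ and then averaged over $z\sim\cZ$. Before that I will show that the stochastic gradients at $x^\star$ have mean zero, since this is what makes the linear term vanish after averaging.

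\emph{Step 1: $\E_z\nabla f(x^\star;z)=0$.} This is the step I expect to need the most care, because differentiating $F$ under the expectation needs some justification. I would avoid the interchange altogether. Since $\sigma_\star^2<\infty$, the vector $\nabla f(x^\star;z)$ is square-integrable, hence integrable, so $g\eqq\E_z\nabla f(x^\star;z)$ is well defined. By $\beta$-smoothness and convexity of each $f(\cdot;z)$,
\begin{align*}
    0\leq f(x;z)-f(x^\star;z)-\ip{\nabla f(x^\star;z)}{x-x^\star}\leq\frac\beta2\norm{x-x^\star}^2 .
\end{align*}
Taking expectations (with $F(x^\star)$ finite, as is implicit in the setup) gives
\begin{align*}
    F(x)=F(x^\star)+\ip{g}{x-x^\star}+O\bigl(\norm{x-x^\star}^2\bigr).
\end{align*}
So $F$ is differentiable at $x^\star$ with $\nabla F(x^\star)=g$. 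Since $x^\star$ minimizes $F$, choosing $x=x^\star-\epsilon g$ with $\epsilon\to0^+$ forces $g=0$.

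\emph{Step 2: the first inequality.} Apply \cref{lem:cocoercive} with $y=x^\star$:
\begin{align*}
    \frac1{2\beta}\norm{\nabla f(x;z)-\nabla f(x^\star;z)}^2\leq f(x;z)-f(x^\star;z)-\ip{\nabla f(x^\star;z)}{x-x^\star}.
\end{align*}
Taking $\E_z$, the inner-product term equals $\ip{g}{x-x^\star}=0$ by Step 1. The right-hand side then becomes $F(x)-F(x^\star)$, and multiplying by $2\beta$ gives the first bound. If $F(x)=\infty$ there is nothing to prove.

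\emph{Step 3: the second inequality.} Use $\norm{a}^2\leq2\norm{a-b}^2+2\norm{b}^2$ with $a=\nabla f(x;z)$ and $b=\nabla f(x^\star;z)$. Taking expectations, the first bound controls $2\E_z\norm{a-b}^2$ by $4\beta(F(x)-F(x^\star))$. The remaining term is $2\E_z\norm{b}^2=2\sigma_\star^2$ by definition of $\sigma_\star^2$.
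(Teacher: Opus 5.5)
Your proof is correct and complete. The paper gives no proof of its own, only a citation to \citet{srebro2010smoothness}, and your route is the standard derivation behind that citation: \cref{lem:cocoercive} at $y=x^\star$, averaged over $z$ using $\E_z\nabla f(x^\star;z)=0$, then $\|a\|^2\le2\|a-b\|^2+2\|b\|^2$ (equivalently, the self-bounding property applied to the nonnegative $\beta$-smooth function $x\mapsto f(x;z)-f(x^\star;z)-\langle\nabla f(x^\star;z),x-x^\star\rangle$). Your Step~1 also supplies a justification the paper takes for granted, namely that $F$ is finite and differentiable at $x^\star$ with $\nabla F(x^\star)=\E_z\nabla f(x^\star;z)=0$, without interchanging differentiation and expectation.
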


The following lemma adapts the stability argument of \citet[Lemma~F.2]{evron2025better} to single-update deletions.

\begin{lemma}[adapted from {\citealp[Lemma~F.2]{evron2025better}}]
\label{lem:seen}
Under the conditions of \cref{thm:main}, assume in addition that $\sigma_\star=0$ and $\eta=1/\beta$. Then, for every $k\geq1$,
\begin{align*}
    \frac1k\sum_{t=1}^k\E\bigl[f_t(x_{k+1})-f_t(x^\star)\bigr]
    \leq
    2\E[F(x_k)-F(x^\star)]+\frac{\beta\norm{x_1-x^\star}^2}k.
\end{align*}
\end{lemma}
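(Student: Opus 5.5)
The plan is a deletion-stability argument with the virtual iterates of \cref{sec:seq-comparison}, now deleting a single update at an arbitrary position. For each $t\in\{1,\ldots,k\}$, let $x^{(t)}$ be the point obtained by running SGD from $x_1$ with the samples $z_1,\ldots,z_{t-1},z_{t+1},\ldots,z_k$ in this order, so that update $t$ is skipped. (For $t=k$ this is just $x_k$.)

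The first fact is a distributional one. Since the samples are i.i.d., $x^{(t)}$ is a function of $k-1$ i.i.d.\ samples, so it has the law of $x_k$. It is also independent of $z_t$. Hence
\begin{align*}
    \E\bigl[f_t(x^{(t)})-f_t(x^\star)\bigr]=\E[F(x_k)-F(x^\star)].
\end{align*}

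Next I relate $f_t(x_{k+1})$ to $f_t(x^{(t)})$ pointwise. Write $g\eqq f_t-f_t(x^\star)$. Under interpolation $x^\star$ minimizes $f_t$ almost surely, so $g\geq0$ and $g$ is $\beta$-smooth. The self-bounding inequality gives $\norm{\nabla g(y)}^2\leq2\beta g(y)$. Combining the smoothness upper bound with Young's inequality $\ip{\nabla g(y)}{x-y}\leq\frac1{2\beta}\norm{\nabla g(y)}^2+\frac\beta2\norm{x-y}^2$ yields
\begin{align*}
    g(x)\leq2g(y)+\beta\norm{x-y}^2 .
\end{align*}
Applying this with $x=x_{k+1}$ and $y=x^{(t)}$, averaging over $t$, and taking expectations gives
\begin{align*}
    \frac1k\sum_{t=1}^k\E\bigl[f_t(x_{k+1})-f_t(x^\star)\bigr]
    \leq
    2\E[F(x_k)-F(x^\star)]+\frac\beta k\sum_{t=1}^k\E\norm{x_{k+1}-x^{(t)}}^2 .
\end{align*}
It then remains to show that $\sum_{t=1}^k\E\norm{x_{k+1}-x^{(t)}}^2\leq\norm{x_1-x^\star}^2$.

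This last bound is the key step. At time $t$ the two trajectories sit at $x_{t+1}$ and $x_t$ respectively, whose difference is $-\eta\nabla f_t(x_t)$. From then on both are updated with the same samples $z_{t+1},\ldots,z_k$. A gradient step with a common sample and $\eta<2/\beta$ is nonexpansive, by \cref{lem:cocoercive} exactly as in the proof of \cref{lem:comparison}. Therefore $\norm{x_{k+1}-x^{(t)}}\leq\eta\norm{\nabla f_t(x_t)}$.

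Finally I bound $\eta^2\sum_t\norm{\nabla f_t(x_t)}^2$ by a telescoping argument toward $x^\star$. Cocoercivity against $x^\star$, where $\nabla f_t(x^\star)=0$ almost surely, gives
\begin{align*}
    \norm{x_{t+1}-x^\star}^2\leq\norm{x_t-x^\star}^2-\eta\Bigl(\frac2\beta-\eta\Bigr)\norm{\nabla f_t(x_t)}^2 .
\end{align*}
With $\eta=1/\beta$ the coefficient is exactly $\eta^2$, so summing over $t$ gives $\eta^2\sum_{t=1}^k\norm{\nabla f_t(x_t)}^2\leq\norm{x_1-x^\star}^2$ pathwise, which completes the bound.

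The main point needing care is the distributional claim for $x^{(t)}$: that it has the law of $x_k$ and is independent of $z_t$. Once that is in place, the rest is a short chain of standard smooth-convex inequalities, and the specific choice $\eta=1/\beta$ is used only to make the constants match exactly.
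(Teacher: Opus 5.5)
Your proof is correct and is essentially the argument the paper intends. The paper gives no written proof, saying only that it adapts the stability argument of \citet[Lemma~F.2]{evron2025better} to single-update deletions. Your $x^{(t)}$ are exactly the virtual iterates of \cref{eq:restart} with $r=t$, $h=1$, and the remaining steps — nonexpansiveness, the bound $g(x)\le 2g(y)+\beta\|x-y\|^2$, and the telescoping bound $\eta^2\sum_t\|\nabla f_t(x_t)\|^2\le\|x_1-x^\star\|^2$ at $\eta=1/\beta$ — reproduce the stated constants exactly.
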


\subsection{Properties of quadratic losses}

For quadratic losses in the interpolation regime, we use the following lemma from \cite{evron2025better}.

\begin{lemma}[{\citealp[Lemma~E.3]{evron2025better}}]
\label{lem:quad-identity}
Under the conditions of \cref{thm:quadratic}, for all $x\in\R^d$ and $z\in Z$,
\begin{align*}
    \ip{\nabla f(x;z)}{x-x^\star}
    =
    2\bigl(f(x;z)-f(x^\star;z)\bigr).
\end{align*}
\end{lemma}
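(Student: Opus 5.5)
The plan is to compute the gradient of each quadratic loss explicitly and then evaluate the inner product directly; no earlier results are needed. Fix $z\in Z$. By the assumption of \cref{thm:quadratic}, for all $x\in\R^d$,
\begin{align*}
    f(x;z)=f(x^\star;z)+\tfrac12(x-x^\star)^\top H_z(x-x^\star),
\end{align*}
where $f(x^\star;z)$ is a constant that does not depend on $x$. I will differentiate this identity in $x$. The gradient of $x\mapsto\frac12(x-x^\star)^\top H_z(x-x^\star)$ is $\frac12(H_z+H_z^\top)(x-x^\star)$, and this equals $H_z(x-x^\star)$ because $H_z$ is symmetric. Hence
\begin{align*}
    \nabla f(x;z)=H_z(x-x^\star).
\end{align*}
This is consistent with the interpolation assumption, since it gives $\nabla f(x^\star;z)=0$.

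Next I take the inner product of this gradient with $x-x^\star$:
\begin{align*}
    \ip{\nabla f(x;z)}{x-x^\star}=(x-x^\star)^\top H_z(x-x^\star)=2\bigl(f(x;z)-f(x^\star;z)\bigr),
\end{align*}
where the last equality substitutes the quadratic identity once more. This is exactly the claimed identity.

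The argument has no real obstacle. The one point to state explicitly is where symmetry of $H_z$ is used: it is what lets the gradient be written as $H_z(x-x^\star)$ rather than $\frac12(H_z+H_z^\top)(x-x^\star)$. The bounds $0\preceq H_z\preceq\beta I$ are not needed for this identity; they only ensure that $f(\cdot;z)$ is convex and $\beta$-smooth, so that these losses fall within the setting of \cref{thm:main}.
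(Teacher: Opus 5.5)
Your proof is correct, and it is the standard direct computation. The paper gives no proof of its own (it cites Lemma~E.3 of \citet{evron2025better}), but it relies on exactly the gradient formula $\nabla f(x;z)=H_z(x-x^\star)$ that you derive, for example in the proof of \cref{lem:quad-decrease}, and your remark that only the symmetry of $H_z$ is needed, not the bounds $0\preceq H_z\preceq\beta I$, is accurate.
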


\section{Proofs for \texorpdfstring{\cref{sec:quadratic}}{Section 4}}
\label{app:quadratic}

Throughout this section, the assumptions of \cref{thm:quadratic} hold, $0<\eta<2/\beta$, and $x_1,x_2,\ldots$ are the iterates of SGD with stepsize $\eta$; we denote $d_s\eqq\E\norm{x_s-x^\star}^2$ for $s\geq1$.

\subsection{Proof of \texorpdfstring{\cref{lem:quad-decrease}}{Lemma 8}}

\begin{proof}[Proof of \cref{lem:quad-decrease}]
\allowdisplaybreaks
Fix $t\geq1$, and for every $r\in\{1,\ldots,t\}$, let $\tilde x^{(r)}$ be the virtual iterates defined in \cref{eq:restart} with $h=1$, restarted from $x_r$ at time $r+1$. Fix $r\in\{1,\ldots,t\}$. The virtual iterate $\tilde x^{(r)}_{t+1}$ is obtained from $x_1$ by the updates with the samples $z_1,\ldots,z_{r-1},z_{r+1},\ldots,z_t$, whereas $x_t$ is obtained from $x_1$ by the updates with the samples $z_1,\ldots,z_{t-1}$; as the samples are i.i.d., $\tilde x^{(r)}_{t+1}$ is distributed as $x_t$, and thus $\E\norm{\tilde x^{(r)}_{t+1}-x^\star}^2=d_t$. Moreover,
\begin{align*}
    \norm{\tilde x^{(r)}_{t+1}-x^\star}^2
    =
    \norm{x_{t+1}-x^\star}^2+2\ip{x_{t+1}-x^\star}{\tilde x^{(r)}_{t+1}-x_{t+1}}+\norm{\tilde x^{(r)}_{t+1}-x_{t+1}}^2.
\end{align*}
Since the virtual iterates and SGD are updated with the same samples, by non-expensiveness of SGD, it holds that $\norm{\tilde x^{(r)}_{s+1}-x_{s+1}}\leq\norm{\tilde x^{(r)}_s-x_s}$ for every $s\geq r+1$; hence, $\norm{\tilde x^{(r)}_{t+1}-x_{t+1}}\leq\norm{\tilde x^{(r)}_{r+1}-x_{r+1}}=\norm{x_{r+1}-x_r}$. Taking expectations in the last display and averaging over $r=1,\ldots,t$, we obtain
\begin{align*}
    d_t-d_{t+1}
    \leq
    \frac1t\biggl(\sum_{r=1}^t\E\norm{x_{r+1}-x_r}^2+2\E\ip{x_{t+1}-x^\star}S\biggr),
    \qquad\text{where }
    S\eqq\sum_{r=1}^t\bigl(\tilde x^{(r)}_{t+1}-x_{t+1}\bigr).
\end{align*}
We next show that, almost surely,
\begin{align}\label{eq:quad-sum}
    \norm S^2
    \leq
    \frac{\norm{x_1-x^\star}^2-\norm{x_{t+1}-x^\star}^2}{(2-\beta\eta)^2},
\end{align}
which gives the claim: by the inequality $2\ip uw\leq\lambda\norm u^2+\norm w^2/\lambda$ with $\lambda=1/(2-\beta\eta)$, we then have $2\ip{x_{t+1}-x^\star}S\leq\frac{\norm{x_{t+1}-x^\star}^2}{2-\beta\eta}+(2-\beta\eta)\norm S^2\leq\frac{\norm{x_1-x^\star}^2}{2-\beta\eta}$.

To prove \cref{eq:quad-sum}, we use the linearity of the updates. For $s\geq1$, let $B_s\eqq I-\eta H_{z_s}$, which is a symmetric matrix with $(1-\beta\eta)I\preceq B_s\preceq I$. Since $\nabla f(x;z)=H_z(x-x^\star)$, the update of SGD reads $x_{s+1}-x^\star=B_s(x_s-x^\star)$, and similarly, $\tilde x^{(r)}_{s+1}-x^\star=B_s(\tilde x^{(r)}_s-x^\star)$ for $s\geq r+1$. As $\tilde x^{(r)}_{r+1}-x_{r+1}=x_r-x_{r+1}=(I-B_r)(x_r-x^\star)$, we obtain
\begin{align*}
    \tilde x^{(r)}_{t+1}-x_{t+1}
    =
    B_t\cdots B_{r+1}(I-B_r)(x_r-x^\star),
\end{align*}
where the product $B_t\cdots B_{r+1}$ is the identity for $r=t$. Moreover, $I-B_s\preceq\frac1{2-\beta\eta}(I-B_s^2)$ for every $s$, since every eigenvalue $\lambda\in[1-\beta\eta,1]$ of $B_s$ satisfies $1+\lambda\geq2-\beta\eta$, and thus $1-\lambda\leq\frac{(1-\lambda)(1+\lambda)}{2-\beta\eta}$. As $\norm{x_s-x^\star}^2-\norm{x_{s+1}-x^\star}^2=(x_s-x^\star)^\top(I-B_s^2)(x_s-x^\star)$ for every $s\geq1$, summing over $s=1,\ldots,t$ gives
\begin{align*}
    \sum_{s=1}^t(x_s-x^\star)^\top(I-B_s)(x_s-x^\star)
    \leq
    \frac{\norm{x_1-x^\star}^2-\norm{x_{t+1}-x^\star}^2}{2-\beta\eta}.
\end{align*}
We pair $S$ with a vector propagated backward by the same matrices. Fix $v\in\R^d$, let $y_{t+1}\eqq v$, and let $y_s\eqq B_sy_{s+1}$ for $s=t,t-1,\ldots,1$. As the matrices $B_s$ are symmetric, $y_{r+1}=B_{r+1}\cdots B_tv=(B_t\cdots B_{r+1})^\top v$, and thus $\ip v{\tilde x^{(r)}_{t+1}-x_{t+1}}=\ip{y_{r+1}}{(I-B_r)(x_r-x^\star)}$. Since $I-B_r\succeq0$ for every $r$, the Cauchy--Schwarz inequality for the positive semidefinite bilinear form $((u_r)_r,(w_r)_r)\mapsto\sum_{r=1}^tu_r^\top(I-B_r)w_r$ gives
\begin{align*}
    \ip vS^2
    &=
    \biggl(\sum_{r=1}^t\ip{y_{r+1}}{(I-B_r)(x_r-x^\star)}\biggr)^2
    \\&\leq
    \sum_{r=1}^ty_{r+1}^\top(I-B_r)y_{r+1}\cdot\sum_{r=1}^t(x_r-x^\star)^\top(I-B_r)(x_r-x^\star).
\end{align*}
By the last display, the second sum is at most $(\norm{x_1-x^\star}^2-\norm{x_{t+1}-x^\star}^2)/(2-\beta\eta)$, and  Since $I-B_r\preceq\frac1{2-\beta\eta}(I-B_r^2)$ and $y_{r+1}^\top(I-B_r^2)y_{r+1}=\norm{y_{r+1}}^2-\norm{y_r}^2$, the first sum is at most
\begin{align*}
    \frac1{2-\beta\eta}\sum_{r=1}^t\bigl(\norm{y_{r+1}}^2-\norm{y_r}^2\bigr)
    =
    \frac{\norm v^2-\norm{y_1}^2}{2-\beta\eta}
    \leq
    \frac{\norm v^2}{2-\beta\eta}.
\end{align*}
Hence, $\ip vS^2\leq\norm v^2(\norm{x_1-x^\star}^2-\norm{x_{t+1}-x^\star}^2)/(2-\beta\eta)^2$, and taking $v=S$ gives \cref{eq:quad-sum}.
\end{proof}

\subsection{Proof of \texorpdfstring{\cref{lem:quad-reduction}}{Lemma 9}}

\begin{proof}[Proof of \cref{lem:quad-reduction}]
Fix $t\geq1$. By \cref{lem:quad-identity}, and  Since $x_t$ is independent of $z_t$, we have $\E\ip{\nabla f_t(x_t)}{x_t-x^\star}=2\E[F(x_t)-F(x^\star)]$, thus,
\begin{align*}
    d_{t+1}
    =
    \E\norm{x_t-\eta\nabla f_t(x_t)-x^\star}^2
    =
    d_t-4\eta\E[F(x_t)-F(x^\star)]+\eta^2\sigma_t^2.
\end{align*}
Moreover, Since $\nabla f(x^\star;z)=0$ for every $z\in Z$, applying \cref{lem:selfbound} conditionally on $z_1,\ldots,z_{t-1}$ gives $\sigma_t^2\leq2\beta\E[F(x_t)-F(x^\star)]$. Combining the two, $d_t-d_{t+1}\geq\bigl(\frac{2\eta}\beta-\eta^2\bigr)\sigma_t^2=\frac{\eta(2-\beta\eta)}\beta\sigma_t^2$, and the claim follows from \cref{lem:quad-decrease}.
\end{proof}

\subsection{Proof of \texorpdfstring{\cref{lem:quad-conversion}}{Lemma 10}}

\begin{proof}[Proof of \cref{lem:quad-conversion}]
Fix $t\geq1$. By \cref{lem:quad-identity}, and  Since $x_t$ is independent of $z_t$, we have $\E\ip{\nabla f_t(x_t)}{x_t-x^\star}=2\E[F(x_t)-F(x^\star)]$, thus,
\begin{align*}
    d_{t+1}
    =
    \E\norm{x_t-\eta\nabla f_t(x_t)-x^\star}^2
    =
    d_t-4\eta\E[F(x_t)-F(x^\star)]+\eta^2\sigma_t^2,
\end{align*}
that is, $\E[F(x_t)-F(x^\star)]=\frac1{4\eta}(d_t-d_{t+1})+\frac\eta4\sigma_t^2$, and the claim follows from \cref{lem:quad-decrease}.
\end{proof}

\end{document}